\theoremstyle{plain}
\newtheorem{theorem}{Theorem}
\newtheorem{proposition}[theorem]{Proposition}
\newtheorem{lemma}[theorem]{Lemma}
\newtheorem{assumption}{Assumption}
\newtheorem{definition}{Definition}
\newtheorem{corollary}[theorem]{Corollary}
\newtheorem{example}{Example}
\theoremstyle{remark}
\newtheorem{algo}{Algorithm}
\renewcommand{\cal}[1]{\mathcal{#1}}
\renewcommand{\r}{\mathbb{R}}
\newcommand{\n}{\mathbb{N}}
\newcommand{\iprod}[2]{\left\langle{#1},{#2}\right\rangle}
\newcommand{\Pacd}{\cal{P}_2^{1}(\r^{d_x})}
\newcommand{\Macd}{\cal{M}_2^{1}}
\newcommand{\KL}{\mathsf{KL}}
\DeclareMathOperator*{\argmax}{arg\,max}
\DeclareMathOperator*{\argmin}{arg\,min}
\newcommand{\PLI}{P{\L}I\xspace}
\title{Fast convergence of the Expectation Maximization algorithm under a logarithmic Sobolev inequality} 
\author{Rocco Caprio and Adam M. Johansen\\Department of Statistics, University of Warwick.\medskip\\ Email: \texttt{\{\href{mailto:rocco.caprio@warwick.ac.uk}{rocco.caprio}, \href{mailto:a.m.johansen@warwick.ac.uk}{a.m.johansen}\}@warwick.ac.uk.}}
\begin{document}
	\maketitle
	\begin{abstract}
		We present a new framework for analysing the Expectation Maximization (EM) algorithm. Drawing on recent advances in the theory of gradient flows over Euclidean--Wasserstein spaces, we extend techniques from alternating minimization in Euclidean spaces to the EM algorithm, via its representation as coordinate-wise minimization of the free energy.	
        In so doing, we obtain finite sample error bounds and exponential convergence of the EM algorithm under a natural generalisation of the log-Sobolev inequality. We further show that this framework naturally extends to several variants of EM, offering a unified approach for studying such algorithms.
		\\
		
		\noindent \textbf{Keywords:} EM algorithm, first-order EM algorithm,  maximum likelihood estimator, empirical Bayes, latent variable models,  non-asymptotic bounds, functional inequalities, log-Sobolev inequality, Wasserstein gradient.
	\end{abstract}
	
	\section{Introduction} \label{sec:intro}

\subsection{The Expectation Maximization Algorithm}
Consider the problem of fitting a probabilistic model, with Lebesgue density $p_\theta(x,y)$, featuring latent variables, $x\in\r^{d_x}$, to observed data, $y$. Within the maximum likelihood framework one seeks model parameters, $\theta\in\r^{d_\theta}$, that maximize the (marginal) likelihood, i.e. the probability density, $p_\theta(y)$, of observing the obtained data upon integrating out the latent variables: $\theta_\star$ in
\begin{equation} \label{eq:MLE}
	\cal{O}_\star:= \argmax_{\theta\in\r^{d_\theta}} p_\theta(y)= \argmax_{\theta\in \r^{d_\theta}} \int p_\theta(x,y)d x.
\end{equation}
One can also characterize the conditional distribution of the latent variables given the observed data under the model specified by parameter vector $\theta_\star$ as
$p_{\theta_\star}(x|y):= {p_{\theta_\star}(x,y)}/{p_{\theta_\star}(y)},$
and due to the Bayesian flavour of this computation and the connection with empirical Bayes \citep{Robbins1955} we will term this the \emph{posterior} distribution of $x$ throughout.
For most models of practical interest, the integral in \eqref{eq:MLE} is intractable, we have no closed-form expressions for $p_\theta(y)$ or its derivatives, and we are unable to directly optimize $p_\theta(y)$. The most popular algorithm to solve this problem is the Expectation Maximization (EM) algorithm \citep{Dempster1977}, which is the object of analysis of this work, together with some of its traditional, or more novel, variations.
As the data, $y$, can be treated as fixed, in the the remainder we shorten our notation and write $\rho_\theta(x)$ for $p_\theta(x,y)$, $Z_\theta$ for its normalizing constant $p_\theta(y)$, $\pi_\theta(x) = \rho_\theta(x) / Z_\theta$ for $p_\theta(x|y)$, and additionally $\ell(\theta,x)$ for the (complete) log-likelihood $\log(\rho_\theta(x))$. When dealing with measures absolutely continuous with respect to the Lebesgue measure we use the same symbol to refer to both the measure and its Lebesgue density.

In this paper, we utilize the connection between EM and a coordinate-wise minimization algorithm applied to the free energy functional identified by \citet{Neal1998}, and illustrated below, to provide non-asymptotic error bounds for EM algorithms under an extended form of the \emph{log-Sobolev inequality}. To do this, we extend an argument commonly used to understand Euclidean alternating minimization algorithms by comparison with gradient descent via the descent lemma \citep{Beck2013,Beck2015,Both2022}, together with recently developed results for using and understanding gradients on the product of Euclidean and Wasserstein spaces \citep{PGD,Caprio2025}. This results in a somewhat different approach to earlier literature, and it directly connects EM's exponential convergence with the concepts of gradients in the space of probability distributions, appropriate log-Sobolev inequalities, and other results and techniques in optimal transport and sampling. 

Such analysis is quite direct, requires limited further technical work, and yields state-of-the-art conclusions for the fast global convergence of EM. We believe that the extended log-Sobolev condition characterizes well this convergence regime, and we provide examples of toy hierarchical models where it holds. Naturally, many practical applications of EM fall without this model class and so outside the immediate reach of the arguments in this paper. We believe that one of the most promising aspects of this analysis is that it strongly suggests that the arguments employed in the literature on functional inequalities to investigate slower-than-exponential and local convergence regimes will also be applicable, upon suitable (non-trivial) generalization, to the EM setting. These have the potential to provide a theory of non-asymptotic convergence, which is not currently available. We highlight some particularly promising directions in Section~\ref{sec:discussion}.

\subsection{Relevant literature}  The importance of EM in statistics, machine learning and science more broadly, 
means that its convergence properties have been extensively studied. Beyond the characterization provided by \citet{Dempster1977}, early analysis was provided by \citet{Boyles1983,Wu1983}. These and other works \citep{Hero1995,Meng1994,Mclachlan2007} focus on asymptotic convergence rates. However, relatively little is known about the non-asymptotic performance.
\citet{Kumar2017,Kunstner2021} leverage interesting connections between the EM algorithm, generalized surrogate optimization methods and mirror descent, respectively, to study this problem. \cite{Kumar2017} show global sub-exponential  rates for the parameter estimates under the hypothesis of strong concavity of the \textit{surrogate function},
\begin{equation} \label{eq:surrogatef}
	\theta\mapsto Q(\theta|\theta') = \int\ell(\theta,x) \pi_{\theta'}(dx).
\end{equation}
for all $\theta'\in\r^{d_\theta}$. Leveraging connections with mirror descent,
\cite{Kunstner2021} obtain sub-exponential rates for the posterior estimates' relative entropy when $(\rho_\theta(d x))_{\theta\in\r^{d_\theta}}$ forms a minimal exponential family (which implies that the surrogate is concave), and exponential rates if further the ratio of missing information is bounded; \cite{Aubin2022} obtain similar results for the latent EM algorithm. 
\cite{Balakrishnan2017} study the related but different problem of deriving convergence rates to the true population parameter, and not the maximum likelihood estimate, in the case of an infinite sample (`population EM') and of a finite sample (`sample EM'), by  also considering an assumption of strong concavity of the surrogate, and provided one starts the algorithm in some neighbourhood of the optimum.
Other lines of work have focused on more specific models or situations, such as Gaussian mixtures \citep{Xu2016,Hao2018,Wang2015}, the case of misspecified models \citep{Dwivedi2020}, or stochastic EM methods \citep{Karimi2019}.

\subsection{A differential analysis of the EM Algorithm}
The EM algorithm, in its standard description, is formulated as the iterative maximization procedure on the surrogate function
\begin{equation} \label{eq:traditionalem}
	\theta_{k+1}=\argmax_{\theta\in \r^{d_\theta}} Q(\theta|\theta_k), \quad k\geq 0.
\end{equation}
Albeit it is common, for both the analysis and the implementation itself of EM, to focus only, and directly, on the evolution of the parameter estimates $\theta_k$ via the surrogate \eqref{eq:traditionalem}, in this work we recognize that the iterations \eqref{eq:traditionalem} are, actually, equivalent to two separate operations, \textit{each} of which is seen to minimize the same function. This is the key observation of \citet{Neal1998}.
\begin{algo} \label{alg:em}
	EM. Input: initial values $(\theta_0,q_0)$ 
	\begin{tabbing}
		\qquad \enspace For $k\geq 0$ \\
		\qquad \qquad  Update the parameter estimate $\theta_{k+1} = \argmax_\theta \int \ell(\theta;x)q_{k}(dx)$ \\
		\qquad \qquad Update the posterior estimate $	q_{k+1} = \pi_{\theta_{k+1}}$ 
	\end{tabbing}
\end{algo}  
Let $\cal{P}(\r^{d_x})$ denote the space of Borel probability measures on the Euclidean space $\r^{d_x}$.
\begin{proposition} \label{pr:emiscd}
	The steps of the EM iterations are equivalent to
	\begin{align*}
		\theta_{k+1} \in& \argmin_{\theta\in\r^{d_\theta}} F(\theta,q_{k}),&
		q_{k+1} \in& \argmin_{q\in\cal{P}(\r^{d_x})} F(\theta_{k+1},q),
	\end{align*}
\end{proposition}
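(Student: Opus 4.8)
The plan is to exhibit the \emph{free energy} functional whose coordinate-wise minimization the statement refers to, and then to read off each EM update by isolating, in each coordinate, the part of the free energy that does not depend on the variable being optimized. Concretely, for $\theta\in\r^{d_\theta}$ and $q\in\cal{P}(\r^{d_x})$ absolutely continuous with respect to Lebesgue measure and with $\int|\log q|\dif q<\infty$ and $\int|\ell(\theta;x)|\,q(\dif x)<\infty$, I would set
\begin{equation*}
	F(\theta,q) := -\int \ell(\theta;x)\,q(\dif x) + \int \log q(x)\,q(\dif x),
\end{equation*}
the free energy of \cite{Neal1998}, and use $\ell(\theta;x)=\log\rho_\theta(x)=\log\pi_\theta(x)+\log Z_\theta$ to rewrite it as
\begin{equation*}
	F(\theta,q) = \mathrm{KL}\!\left(q\,\|\,\pi_\theta\right) - \log Z_\theta .
\end{equation*}
Both representations will be used, one for each step.

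For the M-step, I would fix $q=q_k$ and note that the entropy term $\int \log q_k\,\dif q_k$ is a finite constant that does not involve $\theta$; hence $\argmin_{\theta} F(\theta,q_k)$ coincides with $\argmin_{\theta}\bigl\{-\int\ell(\theta;x)\,q_k(\dif x)\bigr\}=\argmax_{\theta}\int\ell(\theta;x)\,q_k(\dif x)$, which is exactly the update \eqref{eq:emparup}. Since the two argmin/argmax sets then agree as sets, the equivalence $(\ref*{eq:emparup}')$ follows (with the membership "$\in$" accommodating non-uniqueness of the maximizer).

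For the E-step, I would fix $\theta=\theta_{k+1}$ and use the second representation: the term $-\log Z_{\theta_{k+1}}$ does not depend on $q$, so $q\mapsto F(\theta_{k+1},q)$ is minimized precisely where $q\mapsto\mathrm{KL}(q\,\|\,\pi_{\theta_{k+1}})$ is. By Gibbs' inequality the latter is nonnegative and equals zero if and only if $q=\pi_{\theta_{k+1}}$, so the minimizer exists and is unique, equal to $q_{k+1}=\pi_{\theta_{k+1}}$, which is the update \eqref{eq:empostup}; this gives $(\ref*{eq:empostup}')$.

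I expect the only genuine subtlety to be the bookkeeping around the domain of $F$: one must restrict to measures $q$ possessing a density with finite differential entropy (and finite expected complete log-likelihood) so that the splittings above are legitimate and the discarded quantities are genuinely finite constants, and then check that the EM iterates stay in this domain — which is automatic here, since $q_{k+1}=\pi_{\theta_{k+1}}$ for every $k$ and, for the initializing measure $q_0$, under the standing regularity hypotheses on the model. The remaining content — that dropping a term constant in the optimized variable cannot change an $\argmin$, and the strict convexity of $\mathrm{KL}(\cdot\,\|\,\pi_\theta)$ underlying uniqueness of the E-step — is immediate.
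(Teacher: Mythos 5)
Your proof is correct and is exactly the standard argument (the two decompositions $F(\theta,q)=-\int\ell(\theta;x)\,q(\dif x)+\int\log q\,\dif q$ and $F(\theta,q)=\mathrm{KL}(q\|\pi_\theta)-\log Z_\theta$, dropping the term constant in the optimized coordinate, and Gibbs' inequality for the E-step) that the paper relies on implicitly, deferring to \cite{Neal1998} rather than writing it out. Your attention to the domain bookkeeping (finite entropy, the $+\infty$ convention for $q\not\ll\rho_\theta$) is a sensible addition and nothing in it is amiss.
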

\noindent 
where $F:\mathbb{R}^{d_\theta}\times \mathcal{P}(\mathbb{R}^{d_x}) \to \mathbb{R}$ is the \textit{free energy} functional:
\begin{equation}
	\label{eq:entropy}
	F( \theta, q):=\left\{\begin{array}{cl} 
		\displaystyle\int \log\left(\frac{q(x)}{\rho_\theta(x)}\right) q(d x)&\text{if } q\ll \rho_\theta(d x)\\ +\infty&\text{otherwise}\end{array}\right. \quad \forall (\theta, q) \in \cal{M}.
\end{equation}
Therefore, EM algorithm is equivalent to an alternating (or coordinate-wise) minimization procedure on $\cal{M}:=\r^{d_\theta}\times\cal{P}(\r^{d_x})$ for the free energy. This result suggests that, rather than focussing on the evolution of $\theta_{k}$ only as per \eqref{eq:traditionalem}, one should instead look at the joint convergence of $(\theta_k,q_k)$ towards $F$'s minimizers $(\theta_\star,\pi_{\theta_\star})$, and the related evolution in terms of free energy. This turns out to be a fruitful approach, which we now describe.

For reasons that will become clear later, we focus our attention on $\cal{M}_2:=\r^{d_\theta}\times \cal{P}_2(\r^{d_x})$, where $\cal{P}_2(\r^{d_x})$ is the restriction of $\cal{P}(\r^{d_x})$ to elements which are absolutely continuous with respect to Lebesgue measure and admit finite second moments. Since EM is a minimization procedure on $F$, it is key to the analysis to understand $F$'s variations along the EM iterates. Proposition \ref{pr:emiscd} shows that the free energy can only decrease along the EM iterations:
\begin{equation*}
	F(\theta_0,q_0) \geq 	F(\theta_1,q_0) \geq 	F(\theta_1,q_1) \geq \dots \geq F(\theta_k,q_k) \geq F(\theta_{k+1},q_k) \geq F(\theta_{k+1},q_{k+1}),
\end{equation*}
and, to characterize EM's exponential convergence, we would like to quantify this decrease.
The alternating minimization representation of EM suggests we might be able to adapt the analysis of alternating minimization algorithms on Euclidean space \citep{Beck2013,Beck2015,Both2022}. When minimizing a smooth function $f:\mathbb{R}^{d_x} \to \mathbb{R}$ satisfying 
\begin{equation*}
	2\lambda(f(x)-f_\star) \leq \norm{\nabla_x f(x)}^2 \quad \forall x\in\r^{d_x}, \quad \text{where} \quad \lambda>0 \quad \text{and} \quad \quad f_\star := \inf f,
\end{equation*}
a gradient growth condition known as the Polyak--{\L}ojasiewicz inequality, this analysis can be conducted by comparing the alternating minimization updates with appropriate gradient steps. Since the free energy is a function on $\cal{M}_2$, it is not immediately clear how to translate this approach to our setting. However, recent advances in optimal transport provide a solution.

In particular, we show that under a smoothness assumption, we can lower bound EM's free energy decrease abstractly in terms of the norm of $\textup{grad}_{\cal{M}_2}F$, $F$'s gradient in the geometry induced by the product of the Euclidean and Wasserstein metrics on $\cal{M}_2$ (see Appendix \ref{app:geom} for details on this).
Because the posterior updates is on the space of probability measures, the notion of a gradient step with which to compare the EM step relates to the concepts of Wasserstein gradient flows.
Having lower-bounded the decrease in free energy in terms of $\textup{grad}_{\cal{M}_2}F$ via smoothness, we assume that $\textup{grad}_{\cal{M}_2}F$'s norm grows at least quadratically away from $F$'s minimizers along EM iterations, a natural analogue of the \PLI on $\cal{M}_2$. More precisely:
\begin{equation}\label{eq:extlogsobolev}
	2 \lambda (F (\theta, q) - F_\star) \leq  \norm{\textup{grad}_{\cal{M}_2}F(\theta,q)}_{\cal{M}_2}^2
\end{equation}		
for all relevant EM iterates $(\theta,q)$, where $\lambda>0$ is a positive constant and where, as also detailed in Appendix \ref{app:geom}, we can compute
\begin{align} 
	\label{eq:fisherinformation}
	I (\theta, q) := \norm{\textup{grad}_{\cal{M}_2}F(\theta,q)}_{\cal{M}_2}^2
	=& \norm{\int \nabla_\theta \ell ( \theta, x ) q(d x)}^2 + \int \norm{\nabla_x \log \left( \frac{q(x)}{\rho_\theta(x)}\right)}^2 q(d x).
\end{align}
\begin{align*}
	F_\star := \inf_{( \theta, q ) \in \cal{M}_2} F( \theta, q ) =\inf_{\theta\in\r^{d_\theta}}F(\theta,\pi_\theta)= -\log \bigg( \sup_{\theta \in \r^{d_\theta}} Z_\theta \bigg) =:-\log Z_\star.
\end{align*}

We refer to $I(\theta,q)$ as the \textit{extended Fisher information functional},  because if the parameter space is the trivial space $\{\theta\}$, this quantity reduces to the relative Fisher information functional (e.g. see equation (8) in \cite{Otto2000}):
\begin{align} \label{eq:stdfisherinformation}
	I(q||\pi_\theta) := \int \norm{\nabla_x\log\left(\frac{q(x)}{\pi_\theta(x)}\right)}^2 q(d x).
\end{align}
\begin{definition}[Extended log-Sobolev inequality]\label{def:LSI} The measures $(\rho_\theta (d x))_{\theta \in \r^{d_\theta}}$ satisfy the extended log-Sobolev inequality with constant $\lambda > 0$ if \eqref{eq:extlogsobolev} holds for all $(\theta, q)\in\Macd:=\r^{d_\theta}\times\Pacd$.	
\end{definition}
$\Pacd$ is the restriction of $\cal{P}_2(\r^{d_x})$ to probability measures having at least a.e.~differentiable densities to ensure \eqref{eq:fisherinformation} is well defined.
The extended log-Sobolev inequality is a generalization of the log-Sobolev and Polyak--{\L}ojasiewicz inequalities of optimal transport and optimization, and was studied in \cite{Caprio2025} in the context of a gradient flow on $\cal{M}_2$, where it appears naturally.  While the log Sobolev inequality is a statement about a single probability distribution, and \PLI about a single function, the extended log-Sobolev inequality is a statement about a statistical model. In particular, it is always verified if the model is strongly log-concave, and it might hold for non-concave models. The Appendix details sufficient conditions for it to hold, with examples of Bayesian hierarchical models.

Once convergence of the free energy of EM iterates to $F_\star$ via the extended log-Sobolev inequality has been established, we investigate fast convergence of the EM iterates themselves $(\theta_k,q_k)$ to $\cal{M}_\star := \argmin F( \theta, q ) = \{(\theta_\star,\pi_{\theta_\star}):\theta_\star\in\cal{O}_\star\}$, i.e.~to a (local) maximum of the marginal likelihood and its corresponding posterior. To do so, we shall consider the inequality
\begin{equation} \label{eq:exttalagrand}
	2 (F (\theta, q) - F_\star) \geq \lambda \mathsf{d} (( \theta, q ),\cal{M}_\star)^2,
\end{equation}
where $\mathsf{d}$ is the product metric
\begin{align}\label{eq:ddeterministic}
	\mathsf{d}(( \theta, q ),(\theta',q')):=\sqrt{\mathsf{d}_E(\theta,\theta')^2+\mathsf{d}_{W_2}(q,q')^2},
\end{align}
with $\mathsf{d}_E$ and $\mathsf{d}_{W_2}$ denoting the Euclidean and Wasserstein-2 distances, 
which extends a well-known inequality by \citet{Talagrand1996}  and the quadratic growth condition \citep{Anitescu2000}:
\begin{definition}[Extension of the Talagrand inequality]\label{def:talagrand} 
	The measures $(\rho_\theta(d x))_{\theta\in\r^{d_\theta}}$ satisfy an extension of the Talagrand inequality with constant $\lambda > 0$ if \eqref{eq:exttalagrand} holds for all $(\theta, q)\in\cal{M}_2$. 
\end{definition}
An natural extension of a theorem of \citet{Otto2000} shows that there is a deep link between the extended log-Sobolev inequality and the extension of the Talagrand inequality. To state the result, we impose some regularity conditions on the model.
\begin{assumption}\label{ass:model} (i) $\ell$ is twice continuously differentiable in both arguments, and $\nabla^2 \ell (\theta,x) \preceq \iota I_{d_\theta+d_x}$ for some $\iota\in\r$;  (ii) for all $\theta$ in $\r^{d_{\theta}}$ and $x$ in $\r^{d_x}$, $\rho_\theta(x)>0$; (iii) $\pi_\theta$ has finite second moments for all $\theta\in\r^{d_\theta}$; (iv) $Z_\theta<\infty$ for any $\theta\in\r^{d_\theta}$ (v) $\pi_\theta$'s second moments are uniformly bounded in $\cal{O}_\star$.  
\end{assumption}
Assumption \ref{ass:model} imposes only mild regularity conditions on the model and is often satisfied by models used in practice (notice that here $\iota$ is allowed to be any real number). Condition (v) is only required later by Lemma \ref{lemma:ptintegralisnotbig} and Theorem \ref{thm:Femconvergence}, and if $\cal{O}_\star$ consists of finitely many points (e.g.~if Assumption \ref{ass:strongconcave} below holds), it is redundant.
\begin{theorem}[Theorem 2 in \cite{Caprio2025}]\label{thm:exttalagrand}
	If Assumption~\ref{ass:model} holds, and the measures $(\rho_\theta(d x))_{\theta\in\r^{d_\theta}}$ satisfy the extended log-Sobolev inequality with constant $\lambda$, then they also satisfy the extension of the Talagrand inequality with the same constant. 
\end{theorem}
Appendix \ref{sec:checkingass} shows that Assumption~\ref{ass:model} ensures that the conditions in \cite{Caprio2025} hold. For our analysis, this theorem tells us that it is enough to establish convergence of the free energy of EM iterates under the extended log-Sobolev inequality, as the extension of the Talagrand inequality to translates this into the convergence of the iterates themselves. 
As a result, we are able to derive non-asymptotic convergence error bounds in $\mathsf{d}$-distance for both the posterior and parameter estimates of EM, and some of its variants, by assuming only a smoothness condition and that the model satisfies the extended log-Sobolev inequality \eqref{eq:extlogsobolev}.

\section{Non-asymptotic analysis of EM and related algorithms} \label{sec:emconv}

\subsection{EM Algorithm} \label{sec:em}

The goal is to establish exponential convergence of the EM iterates under the extended log-Sobolev inequality. We first establish the exponential convergence of the free energy to its minimizer and then transfer the result, via the extension of the Talagrand inequality and Theorem \ref{thm:exttalagrand}, to the EM iterates themselves. 
Proposition \ref{pr:emiscd} shows that the free energy can only decrease along the EM iterations, and in order to prove EM's exponential convergence, we need to quantify the decrease. To do so, we compare EM's free energy decreases due to the E-M steps with those induced by appropriate gradient-based algorithms.
Since we think of the extended Fisher information functional $I$ \eqref{eq:fisherinformation}  as the squared norm of the free energy gradient, under a smoothness assumption we can lower bound the magnitude of this decrease in terms of $I$. 
\begin{assumption} \label{ass:gradLip}
	 $\ell$ is differentiable and its gradient $\nabla \ell$ is (i) $L_\theta$-Lipschitz in $\theta$ uniformly in $x$; 
	(ii) $L_x$-Lipschitz in $x$ uniformly in $\theta$.
\end{assumption}

\begin{lemma} \label{lemma:Fdecr} If Assumption \ref{ass:gradLip}(i) holds, for all $k\in\n$,
	\begin{equation} \label{eq:F_em_theta_lb}
		F(\theta_k,q_k)-F(\theta_{k+1},q_k) \geq \frac{1}{2L_\theta}I(\theta_k,q_k)
	\end{equation}
\end{lemma}
\begin{proof}
	Consider  $\vartheta_k:=\theta_k+(1/L_\theta)\int \nabla_\theta\ell(\theta_k,x)q_k(d x)$. By the minimality of $\theta_{k+1}$,
	\begin{equation} \label{eq:F_em_theta_lb_1}
		F(\theta_k,q_k) - F(\theta_{k+1},q_k) \geq F(\theta_k,q_k) - F(\vartheta_{k},q_k),
	\end{equation}
	and because $\theta\mapsto \nabla_\theta\ell(\theta,x)$ is $L_\theta$-Lipschitz for all $x\in\r^{d_x}$, 
	\begin{equation*} \label{eq:F_em_theta_lb_2}
		\ell(\theta_k,x) -\ell(\vartheta_k,x) +\iprod{\nabla_\theta \ell(\theta_k,x)}{\vartheta_k-\theta_k}\leq   \frac{L_\theta}{2}\norm{\vartheta_k-\theta_k}^2, 
	\end{equation*}
	so that 
	\begin{align} \label{eq:descentoneuclideansp}
		&F(\theta_k,q_k) - F(\vartheta_k,q_k) 
		= \int (\ell(\vartheta_k,x)-\ell(\theta_k,x))q_k(d x) \nonumber \\
		&\geq \int \left(\frac{1}{L_\theta}\iprod{\nabla_\theta \ell(\theta_k,x)}{\int \nabla_\theta\ell(\theta_k,z)q_k(d z)}-\frac{1}{2L_\theta}\norm{\int \nabla_\theta\ell(\theta_k,z)q_k(d z)}^2\right)q_k(d x)  \nonumber \\
		&= \frac{1}{2L_\theta}\norm{\int\nabla_\theta\ell(\theta_k,x)q_k(d x)}^2,
	\end{align}
	(an inequality known as the \textit{descent lemma} in optimization). Now, since
	\begin{equation} \label{eq:qisminimizing}
		q_{k} \in \argmin_{q\in\cal{P}(\r^{d_x})} F(\theta_{k},q) = \pi_{\theta_k} \Rightarrow I(\theta_k,q_k) = \norm{\int\nabla_\theta\ell(\theta_k,x)q_k(d x)}^2
	\end{equation} 
    the claim follows upon combining the last two equations with \eqref{eq:F_em_theta_lb_1}. 
\end{proof}
Now we just need to know that, along EM iterations, $F$'s gradient grows at least quadratically away from the set of minimizers, but that is precisely what the extended log-Sobolev inequality says.   
\begin{proposition} \label{pr:Femconvergencetheta}
	Let Assumption \ref{ass:gradLip}(i) hold, and assume that the measures $(\rho_\theta(d x))_{\theta\in\r^{d_\theta}}$ satisfy the extended log-Sobolev inequality with constant $\lambda>0$. Then, for all $k\in\n$,
	\begin{equation*}
		F(\theta_{k},q_{k})-F_\star \leq (1-\lambda/L_\theta)^{k}(F(\theta_0,q_0)-F_\star)\leq e^{-k\lambda/L_\theta}(F(\theta_0,q_0)-F_\star).
	\end{equation*}
\end{proposition}
\begin{proof}
	By Lemma \ref{lemma:Fdecr} and then the extended log-Sobolev inequality, 
	\begin{equation*}
		F(\theta_k,q_k)-F(\theta_{k+1},q_{k+1}) \geq F(\theta_k,q_k) - F(\theta_{k+1},q_{k}) \geq \frac{1}{2L_\theta} I(\theta_k,q_k) \geq \frac{\lambda}{L_\theta}(F(\theta_k,q_k)-F_\star)
	\end{equation*}
	for all $k\in\n$, and the claim follows upon rearranging this inequality and iterating.
\end{proof}
We can use the extension of Talagrand's inequality provided by Theorem \ref{thm:exttalagrand} to conclude convergence in $\mathsf{d}$-distance of the EM iterates. 
\begin{corollary} \label{cor:emconvergencetheta}
	Under Assumptions \ref{ass:model} and \ref{ass:gradLip}(i),
	\begin{equation*}
		\lambda \mathsf{d}((\theta_k,q_k),\cal{M}_\star)^2 \leq 2(1-\lambda/L_\theta)^{k}(F(\theta_0,q_0)-F_\star)\leq 2e^{-k\lambda /L_\theta}(F(\theta_0,q_0)-F_\star).
	\end{equation*}
	
\end{corollary}

Having demonstrated that differential arguments provide an efficient way to obtain a convergence bound for both the parameter and the posterior estimates in EM to the maximum likelihood estimate,
we will sharpen this bound before commenting upon it.

To prove the above results, we considered and quantified only the decrease in free energy due to the parameter updates. We can improve the bound by also considering the decreases due posterior updates.  To do so, we would like to follow the same principles as the proofs above, by comparing EM's updates to appropriate gradient steps. We need a notion of a gradient step for $q\mapsto F(\theta_{k+1},q)\propto\KL(q||\pi_{\theta_{k+1}})$. As is well-known in some areas of optimal transport and sampling, in the Wasserstein-2 geometry this coincides with a Langevin step, in the sense that following this gradient direction coincides with evolving the probability density according to the  overdamped Langevin diffusion's Fokker--Planck equation (we included some details in Appendix \ref{app:geom}). Hence, as an analogue of \eqref{eq:F_em_theta_lb_1}, we wish to bound
\begin{equation} \label{eq:F_em_x_lb_1} 
	F(\theta_{k+1},q_k)-F(\theta_{k+1},q_{k+1}) \geq   F(\theta_{k+1},q_k)-F(\theta_{k+1},\textup{Law}(X_k+h\nabla_x \ell(\theta_{k+1},X_k) + \sqrt{2h}\xi_k))
\end{equation}
from below, where $\textup{Law}(X_k)=q_k$, $h>0$ is to be chosen based upon $L_x$, and $\xi_k$ is a standard normal random variable. We would like to obtain a lower bound on the right hand side quantity that depends on $L_x$ and the extended Fisher information functional, giving something that looks like \eqref{eq:F_em_theta_lb}. The only difficulty is that there is not an immediate candidate for an analogue of the descent lemma (the inequality \eqref{eq:descentoneuclideansp}) on $\cal{P}_2(\r^{d_x})$. However, we have the following:

\begin{lemma}[Descent lemma on $\cal{P}_2(\r^{d_x})$] \label{lemma:descentonP} Let Assumption \ref{ass:gradLip}(ii) hold. Let $(p_t)$ be an interpolation in $\cal{P}_2(\r^{d_x})$ between $q_k$ at $t=kh$ and $\textup{Law}(X_k+h\nabla_x \ell(\theta_{k+1},X_k) + \sqrt{2h}\xi_k)$, where $\textup{Law}(X_k)=q_k$, at $t=(k+1)h$, defined by the law of
	\begin{align*}
		Z_{t_-} =& X_k & d Z_t &= \nabla_x \ell(\theta_{k+1},Z_{t_-}) d t + \sqrt{2} d W_t,
	\end{align*}
	where $t\in [kh,(k+1)h]$ and $t_-=kh$. If $h\leq 1/4L_x$,
	\begin{equation} \label{eq:descentonP}
		\partial_t \KL(p_t||\pi_{\theta_{k+1}}) \leq -\frac{1}{2}I(p_t||\pi_{\theta_{k+1}}) + 6L_x^2d_x(t-t_-).
	\end{equation}
\end{lemma}
This result has been used to study Langevin Monte Carlo and was established in \citet{Vempala2019} and refined in \citet[Section 4.2]{Chewi2023}. Appendix \ref{app:pgdconv} contains  a generalization of this result, which we need to study a variation of EM later, and whose proof also illustrates how the above lemma can be established. The term $6L_x^2d_x(t-t_-)$ is a bias term, which arises because of the non-smoothness of the relative entropy \citep{Wibisono2018}.

We need one last ingredient: in the proof of Proposition \ref{pr:Femconvergencetheta} we used the fact that $q_k$ minimizes $F(\theta_{k+1},\cdot)$ to obtain \eqref{eq:qisminimizing}. Now, we can use the fact that $\theta_{k+1}$ minimizes $F(\cdot,q_k)$, to obtain 
\begin{equation} \label{eq:thetaisminimizing}
	\theta_{k+1} \in \argmin_{\theta\in\r^{d_\theta}} F(\theta,q_k) \Rightarrow \int \nabla_\theta \ell(\theta_{k+1},x)q_k(d x) = 0 \Rightarrow  I(\theta_{k+1},q_k)=I(q_k||\pi_{\theta_{k+1}}).
\end{equation} 
However, since we will work with the interpolation as in Lemma \ref{lemma:descentonP} we also need to know that $\int \nabla_\theta \ell(\theta_{k+1},x)p_t(d x)$ is small and establishing that is the focus of the next lemma.
\begin{lemma} \label{lemma:ptintegralisnotbig}
	Let Assumption \ref{ass:gradLip}(ii) hold and let $(p_t)$ be as in Lemma \ref{lemma:descentonP}. For $h\leq 1/4L_x$,
	\begin{equation*}
		\norm{\int \nabla_\theta \ell(\theta_{k+1},x)p_t(d x)}^2 \leq L_x(t-t_-)(C+4d_xL_x), \qquad t\in [kh,(k+1)h] 
	\end{equation*}
	with $C:=\sup_k \frac{1}{2}E\left\{\norm{\nabla_x \ell(\theta_{k+1},X_k)}^2\right\}$. In particular, when $q_k$ is  the EM update $q_k = \arg\min_{q\in \cal{P}_2(\r^{d_x})} F(\theta_{k+1},q)$, and $(\theta_\dagger,x_\dagger)$ is a stationary point of $\ell$, there holds
	\begin{equation} \label{eq:ptintegralisnotbig}
		C\leq L^2\sup_{\theta_\star\in\cal{O}_\star}\left\{\int \norm{x-x_\dagger}^2\pi_{\theta_\star}(d x)+\norm{\theta_\star-\theta_\dagger}^2 \right\} +  \frac{2L^2}{\lambda}(F(\theta_0,q_0)-F_\star).
	\end{equation}
\end{lemma} 
\eqref{eq:ptintegralisnotbig} shows that $C$ is finite under Assumption \ref{ass:model}. It is common to shift coordinates so that $(\theta_\dagger,x_\dagger)=(0,0)$. We now improve on Proposition \ref{pr:Femconvergencetheta}.

\begin{theorem} \label{thm:Femconvergence}
	Let Assumptions \ref{ass:model} and \ref{ass:gradLip} hold, and assume that the measures $(\rho_\theta(d x))_{\theta\in\r^{d_\theta}}$ satisfy the extended log-Sobolev inequality with constant $\lambda>0$. Then, for all $k\in\n$,
	\begin{equation*} \label{eq:Femconvergence}
		F(\theta_{k},q_{k})-F_\star \leq \inf \bigg\{ e^{-\lambda(k/L_\theta + \sum_{j=1}^{k} h_j)}(F(\theta_0,q_0)-F_\star) + B\sum_{j=1}^{k}h^2_{j} e^{-\lambda( \sum_{i=j+1}^{k} h_i+(k-j)/L_\theta)} \bigg\}
	\end{equation*}
	where the infimum is taken across all sequences $\{h_k\}$ such that $h_k\leq 1/4L_x$ for all $k\in\n$, and where $B:=(8L_x^2d_x+CL_x/2)$ and $C$ is as in Lemma \ref{lemma:ptintegralisnotbig}. If only Assumption \ref{ass:gradLip}(i) holds, the bound of Proposition \ref{pr:Femconvergencetheta} still holds.
\end{theorem}
\begin{proof}
	Because  $\partial_t F(\theta_{k+1},p_t) = 	\partial_t \KL(p_t||\pi_{\theta_{k+1}})$, we can write, by equation \eqref{eq:descentonP}, the extended log-Sobolev inequality and Lemma \ref{lemma:ptintegralisnotbig}, that for all $h_{k+1}\leq1/4L_x$, $t\in[kh_{k+1},(k+1)h_{k+1}]$, with $t_- = kh_{k+1}$:
	\begin{align*} \label{eq:F_em_x_lb_2} 
		\partial_t F(\theta_{k+1},p_t) &\leq -\frac{1}{2}I(\theta_{k+1},p_t) +  \frac{1}{2}\norm{\int \nabla_\theta\ell(\theta_{k+1},x)p_t(d x)}^2 + 6L_x^2d_x(t-t_-)   \nonumber	\\
		&\leq -\lambda (F(\theta_{k+1},p_t)-F_\star) +(t-t_-)B.
	\end{align*}
	Hence, 
	\begin{equation*}
		\partial_t \left\{e^{t\lambda}(F(\theta_{k+1},p_{(k+1)h_{k+1}})-F_\star)\right\} = e^{t\lambda}\left\{\lambda(F(\theta_{k+1},p_t)-F_\star)+\partial_t F(\theta_{k+1},p_t)\right\} \leq e^{t\lambda}h_{k+1}B.
	\end{equation*}
	Integrating between $t=kh_{k+1}$ and $t=(k+1)h_{k+1}$ we obtain
	\begin{equation*}
		e^{(k+1)h_{k+1}\lambda}(F(\theta_{k+1},p_{(k+1)h_{k+1}})-F_\star)-e^{k h_{k+1}\lambda}(F(\theta_{k+1},p_{kh_{k+1}})-F_\star) \leq e^{(k+1)h_{k+1}\lambda}h^2_{k+1}B, 
	\end{equation*}
	and thus:
	\begin{equation} \label{eq:F_em_x_lb_3}
		F(\theta_{k+1},p_{(k+1)h_{k+1}})-F_\star \leq e^{-h_{k+1}\lambda}(F(\theta_{k+1},q_k)-F_\star) + h^2_{k+1}B.
	\end{equation}
	Combining the left hand side with the observation in \eqref{eq:F_em_x_lb_1} and the right hand side with the bound in the proof of Proposition \ref{pr:Femconvergencetheta},
	\begin{align*}
		F(\theta_{k+1},q_{k+1}) - F_\star \leq e^{-\lambda(h_{k+1}+1/L_\theta)}(F(\theta_k,q_k)-F_\star) + h^2_{k+1}B.
	\end{align*}
	The result follows upon interating this inequality, and taking the infimum across sequences $\{h_k\}$ such that $h_k\leq 1/4L_x$.
\end{proof}

As before, convergence in $\mathsf{d}$-distance of both the sequence of parameters and the posterior now follows by the extension of the Talagrand inequality via Theorem \ref{thm:exttalagrand}.
\begin{corollary} \label{cor:emconvergence}
	Under the same conditions as Theorem \ref{thm:Femconvergence}, 
	\begin{align*} \label{eq:emconvergence}
		\lambda \mathsf{d}((\theta_k,q_k),\cal{M}_\star)^2 &\leq 2\inf\bigg\{e^{-\lambda(k/L_\theta + \sum_{j=1}^{k} h_j)}(F(\theta_0,q_0)-F_\star) + B\sum_{j=1}^{k}h^2_{j} e^{-\lambda(k/L_\theta+ \sum_{i=j+1}^{k} h_i)} \bigg\} 
	\end{align*}
\end{corollary} 
If only Assumption \ref{ass:gradLip}(i) holds, the bound of Corollary \ref{cor:emconvergencetheta} still does.
Since $\cal{M}_\star=\{(\theta_\star,\pi_{\theta_\star}):\theta_\star \in \cal{O}_\star\}$, we are bounding the distance of the EM iterates to a (local) maximum of the marginal likelihood and the corresponding posterior, which can be identified as the projection of $(\theta_k,q_k)$ onto the optimal set $\cal{M}_\star$.
By definition, the log-Sobolev constant $\lambda$, which dictates the convergence rate of EM, has the interpretation of being the maximum ratio between information and the free energy produced along the EM iterates, as given by the functionals $I$ \eqref{eq:fisherinformation} and $F$ \eqref{eq:entropy}, suggesting some analogies with the ratio of missing information of classical asymptotic theory \citep{Mclachlan2007}.  

The bound above is reminiscent of the convergence bounds in Langevin Monte Carlo when $\{h_k\}$ are its discretization step sizes, and is obtained with the observation \eqref{eq:F_em_x_lb_1} that an E-step is at least as good as the best possible Langevin step to decrease free energy. Therefore, its step sizes become a parameter we can optimize over. Notice that, as opposed to classical Langevin Monte Carlo bounds \citep{Durmus2019}, the rightmost term in Theorem \ref{thm:Femconvergence}'s bound is \textit{not} a bias term, as we can always just take $\{h_k\}=0$ and eliminate it altogether, while retaining a non-zero convergence rate.

It does not appear immediate to quantify how much sharper Theorem \ref{thm:Femconvergence} is than Proposition \ref{pr:Femconvergencetheta} asymptotically. From a non-asymptotic error bounds perspective, the difference seem to be substantial, and we refer to Appendix \ref{sec:illustration} for a simple illustration.

Appendix~\ref{sec:logsobsufficient} provides a mechanism for verifying the extended log Sobolev condition with applications to Bayesian hierarchical  models, and Appendix~\ref{sec:preservation} some results which can be used to extend its domain of applicability even to some non-concave settings: we know, for instance, that the same bound holds if we replace the extended log-Sobolev inequality assumption with $\ell$'s strong concavity, or any bounded perturbation, in the sense of Proposition \ref{pr:perturbationprinciple}, of such model. 	
When $\ell$ is strongly concave, $\ell$ has a unique maximizer, as does the marginal likelihood $Z_\theta$, and $\cal{M}_\star$ is a singleton. At least without additional assumptions, we are required to start the algorithm with an initial distribution $q_0$ having a density, otherwise the term $F(\theta_0,q_0)$ would be infinite. In practice, this is never an issue, and we can avoid this problem altogether by performing an additional E-step as part of the initialization procedure.

We now show that we can use similar techniques to analyze alternatives to the EM algorithm that we consider when either, or both, the E-M steps are intractable.

\subsection{First-order EM}

Often, the M-step is intractable. In these cases, rather than solving the related maximization problem exactly, one often performs instead a gradient step to find a $\theta$ with a smaller, but not optimal, value of $\theta\mapsto F(\theta,q_k) \propto - \int \ell(\theta,x)q_k(d x)$. This results in the first-order EM (or gradient EM) algorithm (see, e.g.~\citet{Balakrishnan2017,Yan2017} and also \citet{Lange1995} for examples where the M-step is intractable, but gradient-based strategies can be implemented).

\begin{algo} \label{alg:gradem}
	First-order EM. Inputs: initial values $(\theta_0,q_0)$, step sizes $\{h_k\}$ 
	\begin{tabbing}
		\qquad \enspace For $k\geq 0$ \\
		\qquad \qquad  Update the parameter estimate $\theta_{k+1} = \theta_k + h_{k+1}\int \nabla_\theta \ell(\theta_k,x)q_{k}(d x)$ \\
		\qquad \qquad Update the posterior estimate $	q_{k+1} = \pi_{\theta_{k+1}}$ 
	\end{tabbing}
\end{algo}

We can easily use the tools developed above to study first-order EM. Indeed, provided $h\leq 1/L_\theta$, we can verbatim adapt the proofs of Lemma \ref{lemma:Fdecr}, Proposition \ref{pr:Femconvergencetheta} and Corollary \ref{cor:emconvergencetheta} with $h$ in place of $1/L_\theta$, yielding:
\begin{theorem} \label{thm:grademconv}
	Let Assumptions \ref{ass:model} and \ref{ass:gradLip}(i) hold, and assume that the measures $(\rho_\theta(d x))_{\theta\in\r^{d_\theta}}$ satisfy the extended log-Sobolev inequality with constant $\lambda>0$. Then, if $h_j\leq 1/L_\theta$ for all $j\leq k$,
	\begin{equation*}
		\lambda \mathsf{d}((\theta_k,q_k),\cal{M}_\star)^2 \leq 2e^{-\lambda \sum_{j=1}^k h_j}(F(\theta_0,q_0)-F_\star).
	\end{equation*}
\end{theorem}
Because the parameter update here is not a minimization step anymore, we cannot leverage the posterior updates to improve the convergence bound in the same way as for the basic EM algorithm. Therefore, we can ensure first-order EM convergence with a small enough choice of the step size, but the error bounds for first-order EM algorithm will in general decrease more slowly than those for EM, which of course is consistent with the way in which we would expect the actual error to behave. 

\subsection{Langevin EM}

In a complementary setting, the E-step might be intractable. Hypothetically, in these cases, we could consider following the same strategy as the first-order EM algorithm, although it is perhaps less obvious how to implement such a scheme: rather than performing the E-step, which is minimizing the free energy for a fixed $\theta_{k+1}$ (Proposition \ref{pr:emiscd}), one can take a gradient step in the space of probability measures to find a new $q_{k+1}$ which reduces $q\mapsto F(\theta_{k+1},q) \propto \KL(q||\pi_{\theta_{k+1}})$. Given the previous iterate $q_k$, for a step size $h>0$, as we argued in Section \ref{sec:em}, in the Wasserstein-2 geometry this step consists in $q_k\rightarrow \textup{Law}(X_{k+1})$, where
\begin{equation} \label{eq:pgdestep}
	X_{k+1} = X_k + h\nabla_x \ell(\theta_{k+1},X_k) + \sqrt{2h}\xi_k, \quad \textup{Law}(X_k)=q_k,
\end{equation}
and where $\xi_k$ is a standard normal random variable. We term the resulting algorithm Langevin EM. Although this algorithm may appear somewhat contrived, we view it as an idealisation of a (Markov chain) Monte Carlo EM \citep{Wei1990} in which $q_k$ is approximated with a particle system and the particles within the approximation are updated by an application of an unadjusted Langevin kernel of step-size $h$ and invariant distribution $\pi_{\theta_k}$ at each step \citep{Roberts1996}. In general, the use of Markov chain Monte Carlo steps within EM algorithms has been explored in a number of contexts, see e.g. \citet{Mcculloch1994,Levine2001}. In particular, this implementation was introduced and studied in \citet[Appendices D and F]{PGD} as the gradient flow approximated by ``marginal particle gradient descent''.

\begin{algo} \label{alg:langem}
	Langevin EM. Inputs: initial values $(\theta_0,q_0)$, step sizes $\{h_k\}$
	\begin{tabbing}
		\qquad \enspace For $k\geq 0$ \\
		\qquad \qquad   Update the parameter estimate $\theta_{k+1} = \argmax_\theta \int \ell(\theta;x)q_{k}(dx)$ \\
		\qquad \qquad Update the posterior estimate $q_{k+1}=\textup{Law}(X_{k+1})$, where\\ \qquad \qquad $	X_{k+1}= X_k + h_{k+1}\nabla_x \ell(\theta_{k+1},X_k) + \sqrt{2h}\xi_k$ 
	\end{tabbing}
\end{algo}

We can study Langevin EM almost immediately with our results: we follow verbatim the proof of Theorem \ref{thm:Femconvergence} until \eqref{eq:F_em_x_lb_3}. Then, rather than combining the bound with the one in Proposition \ref{pr:Femconvergencetheta}, which we cannot use as the posterior update is not a minimization step here, we use the looser bound $F(\theta_{k+1},q_k)\leq F(\theta_k,q_k)$, iterate the resulting inequality and then combine it with Theorem \ref{thm:exttalagrand} as usual. 
\begin{theorem} \label{thm:langemconvergence}
	Let Assumptions \ref{ass:model} and \ref{ass:gradLip}(ii) hold, and assume that the measures $(\rho_\theta(d x))_{\theta\in\r^{d_\theta}}$ satisfy the extended log-Sobolev inequality with constant $\lambda>0$. Then, if $h_j\leq 1/4L_x$ for all $j\leq k$,
	\begin{equation*} 
		\lambda \mathsf{d}((\theta_k,q_k),\cal{M}_\star)^2 \leq 2e^{-\lambda \sum_{j=1}^k h_j}(F(\theta_0,q_0)-F_\star) +  B\sum_{j=1}^{k}h^2_{j} e^{-\lambda \sum_{i=j+1}^{k} h_i}. 
	\end{equation*}
\end{theorem}
$B$ is as defined as in Theorem \ref{thm:Femconvergence}. In contrast to the EM case, we have not shown that $B$ is always finite here. However, doing so is not typically onerous and is, for example, immediately true  if the log-likelihood is strongly concave in the tails uniformly on $\theta$, as can be shown by adapting e.g.~the arguments of Corollary 2.3 in \citet{Cattiaux2008}.

Unlike EM, or first-order EM, Langevin EM is biased: if we keep constant step sizes, there is no way to cancel the rightmost term without getting a null rate of convergence. However, we can control the bias by choosing a small step size, or by decreasing these appropriately. This is because the forward discretization of the Langevin diffusion \eqref{eq:pgdestep} introduces a bias, and is consistent with results for Langevin Monte Carlo \citep{Wibisono2018,Durmus2019}. Similar considerations to those of first-order EM apply here.

\subsection{Gradient Descents} \label{sec:cwpgd}
When both E- and M- steps are intractable, we can take a gradient step in both directions at each iteration; essentially replacing the usual alternating minimization optimization of the free energy with an alternating (or coordinate-wise) gradient descent method, as one might in Euclidean optimization problems for which both the coordinate-wise minimization problems are not tractable. This results in the Alternating Gradient Descent Algorithm \ref{alg:cwpgd} below, a mean-field limit version of the algorithm of \citet{Bortoli2021} when the batch sizes are set to one, and when the parameter updates can be carried out explicitly. 

\begin{algo} \label{alg:cwpgd}
	Alternating Gradient Descent. Inputs: initial values $(\theta_0,q_0)$, step sizes $\{h_k\}$
	\begin{tabbing}
		\qquad \enspace For $k\geq 0$ \\
		\qquad \qquad  Update the parameter estimate $\theta_{k+1} = \theta_k + h_{k+1}\int \nabla_\theta \ell(\theta_k;x)q_{k}(d x)$ \\
		\qquad \qquad Update the posterior estimate $q_{k+1}=\textup{Law}(X_{k+1})$, where \\ \qquad \qquad $	X_{k+1}= X_k + h_{k+1}\nabla_x \ell(\theta_{k+1},X_k) + \sqrt{2h}\xi_k$ 
	\end{tabbing}
\end{algo}

\begin{theorem} \label{thm:cwpgdconvergence}
	Let Assumptions \ref{ass:model}--\ref{ass:gradLip} hold, and assume that the measures $(\rho_\theta(d x))_{\theta\in\r^{d_\theta}}$ satisfy the extended log-Sobolev inequality with constant $\lambda>0$. Let $L:=\max(L_\theta,L_x)$. Then, if $h\leq 1/8L$, for all $k\in\n$,
	\begin{equation*} \label{eq:Fconvergence}
		\lambda \mathsf{d}((\theta_k,q_k),\cal{M}_\star)^2 \leq 2e^{-\lambda \sum_{j=1}^k h_j}(F(\theta_0,q_0)-F_\star) + 24L^2d_x \sum_{j=1}^{k}h^2_{j} e^{-\lambda \sum_{i=j+1}^{k} h_i}
	\end{equation*}
\end{theorem}

It is also possible to update both parameters and distributions simultaneously (rather than coordinate-wise, as above), in which case we would obtain (a mean-field limit version of) the particle gradient descent algorithm proposed in \cite{PGD}, studied and extended in \citet{Lim2023,Caprio2025}, or of the interacting Langevin algorithm \citep{Akyildiz2023}.

\begin{algo} \label{alg:pgd}
	Gradient Descent. Inputs: initial values $(\theta_0,q_0)$, step sizes $\{h_k\}$
	\begin{tabbing}
		\qquad \enspace For $k\geq 0$ \\
		\qquad \qquad  Update the parameter estimate $\theta_{k+1} = \theta_k + h_{k+1}\int \nabla_\theta \ell(\theta_k;x)q_{k}(d x)$ \\
		\qquad \qquad Update the posterior estimate $q_{k+1}=\textup{Law}(X_{k+1})$, where \\ \qquad \qquad $	X_{k+1}= X_k + h_{k+1}\nabla_x \ell(\theta_{k},X_k) + \sqrt{2h}\xi_k$ 
	\end{tabbing}
\end{algo}

Algorithm \ref{alg:pgd} is the discretization of the gradient flow of the free energy w.r.t.~the Euclidean--Wasserstein metric on $\cal{M}_2$ \citep{PGD,Caprio2025}, see also Appendix \ref{app:geom} for further details.

\begin{theorem} \label{thm:pgdconvergence}
	Let Assumptions \ref{ass:model}--\ref{ass:gradLip} hold, and assume that the measures $(\rho_\theta(d x))_{\theta\in\r^{d_\theta}}$ satisfy the extended log-Sobolev inequality with constant $\lambda>0$. Let $L:=\max(L_\theta,L_x)$. Then, if $h\leq 1/4L$, for all $k\in\n$,
	\begin{equation*} 
		\lambda \mathsf{d}((\theta_k,q_k),\cal{M}_\star)^2 \leq 2e^{-\lambda \sum_{j=1}^k h_j}(F(\theta_0,q_0)-F_\star) + 12L^2d_x \sum_{j=1}^{k}h^2_{j} e^{-\lambda \sum_{i=j+1}^{k} h_i}
	\end{equation*}
\end{theorem}
Like Langevin EM, both $F$'s gradient descents algorithms are expectedly biased if step sizes are constant, and this is in line with previous analyses of similar algorithms \citep{Caprio2025,Akyildiz2023,Encinar2024,Oliva2024}. 

A comparison of the results for EM, first-order EM, Langevin EM and both the gradient descents approaches shows that the vanilla EM algorithm is the fastest of the five in terms of number of iterations (at least for this class of models). While comparing error \emph{bounds} cannot provide definitive statements, this is consistent with the way one would expect the errors themselves to behave in general, because EM is solving two entire minimization problems at each iterations, whereas first-order EM and Langevin EM are solving one and taking a gradient step for the other, and the gradient descents approaches are actually only taking a gradient steps both directions. Our bounds also suggest that the gradient descent's performances might be comparable to that of the alternating gradient descent algorithm, but that in the former one might take larger step sizes. The intuition might be that, because in Algorithm \ref{alg:cwpgd} we take the same step size for both coordinates, and we take gradient steps coordinate-wise, we have to be more conservative in the steps as to not overshoot in each of these. We speculate that different step sizes, depending on the interplay between $L_x$ and $L_\theta$, might also lead to sharper step sizes guarantees for the alternating approach in some cases. 

This hierarchy between alternating minimization and gradient descent type algorithms is consistent with the corresponding results and behaviour in Euclidean space \citep{Beck2013,Spall2012}. 
These results do not necessarily imply that implementable variants of EM algorithm outperforms the implementable form of any of these alternatives in practice, although it will certainly be the case when the minimization steps can be implemented exactly. In Appendix \ref{sec:illustration} we present a simple example showing performance and bounds for each of the algorithms under examination, which corroborate these findings.
Algorithms \ref{alg:em}--\ref{alg:pgd} are idealized algorithms, and for many other models of interest, particularly in large-scale modern applications, the iterations therein are still intractable. In practice, one often resorts to Monte Carlo approximations. As noted above, for EM and first-order EM, it is common to substitute the E-steps with $N$ Monte Carlo samples from $\pi_{\theta_{k+1}}$ (the Monte Carlo EM algorithm \citep{Wei1990}) whereas one would make Algorithms \ref{alg:langem}--\ref{alg:pgd} implementable by approximating $\textup{Law}(X_{k+1})$ with the empirical distribution of $N$ particles following the specified Langevin dynamics.  An analysis of these Monte Carlo versions of the algorithms under the extended log-Sobolev inequality and using the techniques we introduced here is left for future work.

\section{Discussion} \label{sec:discussion}

This paper established non-asymptotic error bounds and convergence rates for the EM algorithm, and some of its variants, under a log-Sobolev type inequality. 
The log-Sobolev constant, which dictates the convergence rate in EM (Corollary \ref{cor:emconvergence}), has the natural interpretation of being bounded by the maximum ratio between information and free energy produced along the EM iterates (as measured by the functionals $I$ and $F$). The convergence bounds seem to be state-of-the-art in terms of characterizing the exponential regime of EM, at least when the latent space is continuous. 
Starting from the observation of \citet{Neal1998} that EM corresponds to a coordinate-wise minimization procedure on the product of Euclidean and the space of probability distributions, this approach can be considered a generalization of standard arguments used to study alternating minimization on Euclidean space \citep{Beck2013} via concepts in optimal transport.

We believe the main contribution of this paper is to demonstrate that generalisation of methods from Euclidean optimization to the Euclidean--Wasserstein product space underlying EM provides a natural route to characterizing the EM algorithm theoretically. The perspective established herein, and the simplicity of the proofs, demonstrate that there is considerable potential to further leverage the literatures of optimal transport and functional inequalities to better understand the EM algorithm. Below we mention some limitations of the work, as well as some potential
connections and extensions, some of which form part of current or future research. 

To obtain non-asymptotic bounds, our strategy was to compare the E-M-steps with appropriate gradient-based algorithms, for which we were able to quantify the free energy's decrease using the model's assumed smoothness. On the other hand, EM \textit{always} decreases the free energy, regardless of smoothness. This suggests that, for non-smooth models, one might compare the E-M-steps with different algorithms that are more appropriate in these settings (for instance, proximal methods \citep{Salim2020}).  On a similar note, we remark that the EM iterations make no use of any underlying metric on $\cal{M}$, and we could have considered others, and thus different induced log-Sobolev type inequalities. For instance, by considering the gradient of the free energy $\textup{grad}_{\cal{M}_2}F$ induced by the product of Euclidean and Stein geometry on $\cal{P}_2(\r^{d_x})$ \citep{Liu2016,Duncan2023}. These would then require comparing the E-M steps to different algorithms, and they might translate into different practical conditions on the underlying model (as analogues of the results in Appendix \ref{sec:diffinequalities}), and could better characterize EM's convergence in some settings.

Appendix~\ref{sec:diffinequalities} provides generalizations of several standard results (the Bakry--{\'E}mery criterion, the Holley--Stroock and the contraction principles) to allow verification of the extended log-Sobolev inequality and comparison of the performance of models with different completions, and we expect that many other results of this flavour can be adapted from the log Sobolev inequality literature, further characterizing the exponential convergence regime of EM. 
Regardless, global exponential convergence of EM is not typical in applications, and the extended log-Sobolev inequality is shown to hold here only for particularly well-behaved Bayesian hierarchical models. However, exploiting the connections we established, we anticipate that one might characterize slower-than-exponential and local convergence regimes for wider classes of models by leveraging the connections with functional inequalities. For instance, we anticipate that it is possible to obtain `weak' or `modified' versions of the extended log-Sobolev inequality, as done in \citep{Toscani2000,Rockner2001,Cattiaux2007,Andrieu2022} for more standard functional inequalities. For instance, by considering inequalities of the form
\begin{equation*}
	2 \lambda (F (\theta, q) - F_\star)^\alpha \leq  \norm{\textup{grad}_{\cal{M}_2}F(\theta,q)}_{\cal{M}_2}^2
\end{equation*}
for some $\alpha> 1$, which, used within Proposition \ref{pr:Femconvergencetheta}, would lead to
\begin{equation*}
	F(\theta_{k},q_{k})-F(\theta_{k+1},q_{k+1}) \geq (\lambda/L_\theta)(F(\theta_k,q_k)-F_\star)^\alpha
\end{equation*}
i.e. to a \emph{sub-geometric} decrease of free energy along EM iterates.

A limitation of the approach is the focus on continuous Euclidean state spaces. While these constitute an important class of the models to which EM is applied, in particular within the Empirical Bayes framework, many other typical applications, such as Gaussian Mixture Models, involve discrete latent spaces. We believe that it should be possible to obtain analogues of our results in the discrete setting by leveraging advances on the study of Wasserstein gradients on discrete spaces \citep{Chow2018}.  Similarly, within a continuous state spaces setting, it should be possible to consider appropriate Riemannian manifolds, rather than $\r^{d_x}$.

\section*{Acknowledgement}
The authors thank Juan Kuntz for very helpful comments on an earlier version of this manuscript, and the reviewers for their very constructive feedback. RC was funded by the UK Engineering and Physical Sciences Research Council (EPSRC) via studentship 2585619 as part of grant number EP/W523793/1. AMJ acknowledges further support from EPSRC under grant numbers EP/R034710/1 and from United Kingdom Research and Innovation (UKRI) via grant number EP/Y014650/1, as part of the ERC Synergy project OCEAN. 

\printbibliography
\appendix
\appendixpage
\section{Notation}
	\begin{tabular}{ c l }
		$\rho_\theta(x)$ & Joint distribution $p_\theta(x,y)$ of latent variables $x$ and data $y$  \\
		$\ell(\theta,x)$ & Logarithm of $\rho_\theta(x)$  \\
		$Z_\theta$ & The marginal likelihood, $Z_\theta:=\int \rho_\theta(x)d x$  \\ 
		$Z_\star$ & Value of $Z_\theta$ at a stationary point \\     
		$\pi_\theta(x)$ & Posterior of $x$ given $y$, $\pi_\theta(x)=p_\theta(x|y)=\rho_\theta(x)/Z_\theta$  \\
		$F(\theta,q)$ & Free energy functional \eqref{eq:entropy} \\
		$I(\theta,q)$ & Extended Fisher information functional \eqref{eq:fisherinformation} \\
		$I(q||\pi_\theta)$ & Relative Fisher information functional \eqref{eq:stdfisherinformation} \\ 
		$\cal{O}_\star$ & Local maxima of the marginal likelihood \\
		$\cal{P}_2(\r^{d_x})$ & Space of probability measures with densities and finite second moments \\
		$\cal{M}_2$ & Product spaces, $\cal{M}_2:=\r^{d_\theta}\times \cal{P}_2(\r^{d_x})$ \\
		$\cal{M}_\star$ & $F$'s optimal set in $\cal{M}$, $\cal{M}_\star=\argmin F = \{(\theta_\star,\pi_{\theta_\star}):\theta_\star \in \cal{O}_\star\}$ \\ 
		$\iprod{\cdot}{\cdot}$ and $\norm{\cdot}$ & Euclidean inner product and norm \\
		$\iprod{\cdot}{\cdot}_{\cal{L}_2(d x)}$ & $\cal{L}_2(\r^{d_x},d x)$'s inner product \\
		$\iprod{\cdot}{\cdot}_{W_2}, \iprod{\cdot}{\cdot}_{\cal{M}_2}$ & Wasserstein-2 and $\cal{M}_2$ inner products (Definitions \ref{def:W2iprod} and \ref{def:M2iprod}) \\
		$\mathsf{d}_E,\mathsf{d}_{W_2}$ and $\mathsf{d}$ & Euclidean distance, Wasserstein-2 distance, and their product \eqref{eq:ddeterministic} \\
		$\nabla$ and $\nabla\cdot$ & Gradient and divergence operators \\
		$\Delta$ & Laplacian operator, $\Delta:=\nabla\cdot\nabla$
	\end{tabular}

\section{Riemannian structure of $\cal{M}_2$} \label{app:geom}

In this section, we give some details on the Riemannian structure of $\cal{M}_2$,  we  show how we can derive the gradient of the free energy in $\cal{M}_2$'s geometry, which is key for our analysis, and which we need to make sense of the inequality \eqref{eq:extlogsobolev}. In particular, we derive an expression for $\textup{grad}_{\cal{M}_2}F(\theta,q)$ and its norm, motivating the use of the functional \eqref{eq:fisherinformation} in the manuscript, and also Algorithm \ref{alg:pgd}.
To do so, we leverage the formal interpretation of $\cal{P}_2(\r^{d_x})$ endowed with the Wasserstein-2 distance $\mathsf{d}_{W_2}$ as a Riemannian-manifold, an approach pioneered by \citet{Otto2001}. These computations were carried out in \citet{PGD} to make sense of gradient descent for $F$, which we discuss in Section \ref{sec:cwpgd} of the manuscript. In this section we skim over technical details; this approach is not completely rigorous but more technical arguments following the approach in \cite{ambrosio2005} yield the same result. 
None of our results depend upon this informal reasoning. 

If we think of $\cal{M}_2$ as a Riemannian manifold, in order to define gradients on $\cal{M}_2$, we need a sensible notion of tangent space and inner product. Because of the product structure of $\cal{M}_2=\r^{d_\theta}\times \cal{P}_2(\r^{d_x})$, our tangent space will be the product of the tangent spaces of each component. For the  $\r^{d_\theta}$ component, we identify the tangent space at any point $\theta$ as  $\r^{d_\theta}$ itself, so that $\cal{T}_\theta \r^{d_\theta}=\r^{d_\theta}$. For $\cal{P}_2(\r^{d_x})$ we consider the geometry induced by the Wasserstein-2 distance $\mathsf{d}_{W_2}$. With this choice, we think at the tangent space at any point $q\in \cal{P}_2(\r^{d_x})$ as the space of Lebesgue-integrable functions with zero integral $	\cal{T}_q \cal{P}_2(\r^{d_x}):=\left\{ h: \int h(x)d x = 0 \right\}$,
which we endow with the following inner product.
\begin{definition}[Wasserstein-2 inner product] \label{def:W2iprod}
	Given two elements $h_1,h_2\in\cal{T}_q \cal{P}_2(\r^{d_x})$, we define their Wasserstein-2 inner product at $q\in\cal{P}_2(\r^{d_x})$ as  
	\begin{equation*}
		\iprod{h_1}{h_2}_{W_2}:=\int \iprod{\nabla_x \psi_1(x)}{\nabla_x \psi_2(x)}q(d x), \quad \text{where $\psi_i$ solves} \quad \nabla_x \cdot(q\nabla_x \psi_i) = -h_i \quad i=1,2.
	\end{equation*}
	and we denote with $\norm{\cdot}_{W_2}$ the induced norm.
\end{definition}
One motivation for this choice of tangent space and inner product comes from comparing the Benamou--Brenier formula \citep[p.159]{Villani2009},
\begin{equation} \label{eq:benamoubrenierformula}
	\mathsf{d}_{W_2}(q,p)= \inf_{q_t} \left\{ \int_0^1 \inf_{v_t} \left\{ \int \norm{v_t}^2 q_t(d x)d t: \nabla_x\cdot(q_t v_t)=-\partial_t q_t \right\}: q_0=q,  q_1=p \right\},
\end{equation}
with the formula for the distance between two points in a Riemannian manifold $(M,\mathsf{d}_M)$ 
\begin{equation*}
	\mathsf{d}_M(p,q)=\inf_{q_t} \left\{\int_0^1 \norm{\partial_t q_t}_M d t :  q_0=q,  q_1=p \right\},
\end{equation*}
upon noting that the optimal `velocity' field $v_t$ realizing the infimum in \eqref{eq:benamoubrenierformula} is achieved by a gradient of a function $\nabla_x \psi$, and that for $\nabla_x\cdot(q\nabla_x\psi)=-h$ to be solvable it is required that $h$ has zero Lebesgue integral. See \citet{Figalli2021,Villani2009} for more details. 

For a point $(\theta,q)\in\cal{M}_2$, we set $\cal{T}_{(\theta,q)}\cal{M}_2:=\cal{T}_\theta\r^{d_\theta}\times \cal{T}_q\cal{P}_2(\r^{d_x})$ and we endow $\cal{M}_2$ with the following inner product which arises naturally from the product-space structure.

\begin{definition}[$\cal{M}_2$ inner product] \label{def:M2iprod}
	Given two elements $(a_1,h_1),(a_2,h_2)\in \cal{T}_{(\theta,q)}\cal{M}_2$, we define their $\cal{M}_2$ inner product as
	\begin{equation*}
		\iprod{(a_1,h_1)}{(a_2,h_2)}_{\cal{M}_2}:=\iprod{a_1}{a_2} + 	\iprod{h_1}{h_2}_{W_2}
	\end{equation*}
	and we denote with $\norm{\cdot}_{\cal{M}_2}$ the induced norm.
\end{definition}

We can now define the gradient on  $\cal{M}_2$ in analogy with Riemannian geometry (see, e.g., Section 3 in \cite{Boumal2023}).

\begin{definition}[Gradients in $\cal{M}_2$] \label{def:M2grad}
	For a functional $F$ on $\cal{M}_2$, its $\cal{M}_2$-gradient at $(\theta,q)\in\cal{M}_2$ is the unique function $\textup{grad}_{\cal{M}_2}F(\theta,q)=(\textup{grad}_{\r^{d_\theta}}F(\theta,q),\textup{grad}_{\cal{P}_2(\r^{d_x})}F(\theta,q))$ such that
	\begin{equation*}
		\frac{d}{d t}\bigg\vert_{t=0} F(\theta_t,q_t) = \iprod{\textup{grad}_{\cal{M}_2}F(\theta,q)}{\left(\frac{d}{d t} \bigg\vert_{t=0} \theta_t , \frac{d }{d t}\bigg\vert_{t=0}q_t \right)}_{\cal{M}_2}
	\end{equation*}
	for any smooth curve $t\mapsto (\theta_t,q_t)$ such that $(\theta_0,q_0)=(\theta,q)$, provided that it exists.
\end{definition}	

The first variation will prove useful in calculating gradients, as the following lemma shows.
\begin{definition}[First variation] \label{def:firstvar}
	For a functional $F$ on $\cal{M}_2$, its first variation at $(\theta,q)\in\cal{M}_2$ is the unique (up to an additive constant) function $\delta_{\cal{M}_2} F(\theta,q) = (\delta_{\r^{d_\theta}} F(\theta,q), \delta_{\cal{P}_2(\r^{d_x})} F(\theta,q))$ such that
	\begin{equation*}
		\frac{d}{d t}\bigg\vert_{t=0} F(\theta_t,q_t) = \iprod{\delta_{\cal{M}_2} F(\theta,q)}{\left(\frac{d}{d t} \bigg\vert_{t=0} \theta_t , \frac{d }{d t}\bigg\vert_{t=0}q_t \right)}_{\cal{L}^2(d x)}
	\end{equation*}
	for any smooth curve $t\mapsto (\theta_t,q_t)$ such that $(\theta_0,q_0)=(\theta,q)$, provided that it exists.
\end{definition}
\begin{lemma} If it exists, \label{lemma:gradcomp} $\textup{grad}_{\cal{M}_2}F(\theta,q)$ satisfies 
	\begin{equation*}
		\textup{grad}_{\cal{M}_2}F(\theta,q) = \left(\delta_{\r^{d_\theta}} F(\theta,q),-\nabla_x \cdot(q\nabla_x \delta_{\cal{P}_2(\r^{d_x})} F(\theta,q))\right).
	\end{equation*}
	
\end{lemma}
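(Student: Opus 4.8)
The plan is to play the two definitions, Definition~\ref{def:M2grad} and Definition~\ref{def:firstvar}, off against each other, using the Wasserstein inner product of Definition~\ref{def:W2iprod} to convert the flat $\cal{L}^2(\dif x)$ pairing that appears in the first variation into the curved $\cal{M}_2$ pairing that appears in the gradient. Fix $(\theta,q)\in\cal{M}_2$ and an arbitrary smooth curve $t\mapsto(\theta_t,q_t)$ with $(\theta_0,q_0)=(\theta,q)$, and write $a:=\frac{\dif}{\dif t}\big\vert_{t=0}\theta_t\in\r^{d_\theta}$ and $h:=\frac{\dif}{\dif t}\big\vert_{t=0}q_t\in\cal{T}_q\cal{P}_2(\r^{d_x})$. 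By Definition~\ref{def:firstvar},
\begin{equation*}
	\frac{\dif}{\dif t}\bigg\vert_{t=0} F(\theta_t,q_t) = \iprod{\delta_{\r^{d_\theta}} F(\theta,q)}{a} + \int \delta_{\cal{P}_2(\r^{d_x})} F(\theta,q)(x)\, h(x)\dif x .
\end{equation*}
The Euclidean term is already in the form required by Definition~\ref{def:M2grad}, so all the work lies in rewriting the second term.

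Next I would exhibit the second term as a Wasserstein inner product. Set $g:=-\nabla_x\cdot\big(q\nabla_x \delta_{\cal{P}_2(\r^{d_x})} F(\theta,q)\big)$. One first checks that $g\in\cal{T}_q\cal{P}_2(\r^{d_x})$: integrating a divergence over $\r^{d_x}$ yields zero under the decay conventions of this section, so $\int g(x)\dif x=0$. By construction $\psi_g:=\delta_{\cal{P}_2(\r^{d_x})}F(\theta,q)$ solves $\nabla_x\cdot(q\nabla_x\psi_g)=-g$, which is precisely the elliptic equation defining the potential attached to $g$ in Definition~\ref{def:W2iprod} (this potential being determined only up to an additive constant, which is immaterial since only its gradient enters). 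Writing $\psi_h$ for the potential of $h$, i.e.\ $\nabla_x\cdot(q\nabla_x\psi_h)=-h$, an integration by parts gives
\begin{equation*}
	\iprod{g}{h}_{W_2}=\int \iprod{\nabla_x\psi_g}{\nabla_x\psi_h}\,q(\dif x) = -\int \psi_g\,\nabla_x\cdot(q\nabla_x\psi_h)\dif x = \int \delta_{\cal{P}_2(\r^{d_x})}F(\theta,q)\,h\,\dif x .
\end{equation*}

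Combining the two displays, for every admissible curve
\begin{equation*}
	\frac{\dif}{\dif t}\bigg\vert_{t=0} F(\theta_t,q_t) = \iprod{\delta_{\r^{d_\theta}}F(\theta,q)}{a} + \iprod{g}{h}_{W_2} = \iprod{\big(\delta_{\r^{d_\theta}}F(\theta,q),\,g\big)}{(a,h)}_{\cal{M}_2},
\end{equation*}
the last equality being Definition~\ref{def:M2iprod}. As the curve varies, $(a,h)$ ranges over $\cal{T}_{(\theta,q)}\cal{M}_2$, so this is exactly the characterizing identity for $\textup{grad}_{\cal{M}_2}F(\theta,q)$ in Definition~\ref{def:M2grad}; by the uniqueness built into that definition, $\textup{grad}_{\cal{M}_2}F(\theta,q)=\big(\delta_{\r^{d_\theta}}F(\theta,q),\,g\big)$, which is the asserted formula. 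The step I expect to require the most care is the integration by parts identifying $\iprod{g}{h}_{W_2}$ with the $\cal{L}^2(\dif x)$ pairing: it presupposes enough regularity of $q,\psi_g,\psi_h$ and sufficient decay at infinity to discard the boundary term — precisely the kind of technical hypothesis that the surrounding discussion flags as being treated rigorously via the approach of \cite{ambrosio2005}, and which here may be taken for granted.
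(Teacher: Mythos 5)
Your proposal is correct and follows essentially the same route as the paper's proof: both identify $\delta_{\cal{P}_2(\r^{d_x})}F(\theta,q)$ as the potential of the candidate gradient and use a single integration by parts to match the $\cal{L}^2(\dif x)$ pairing from Definition~\ref{def:firstvar} with the Wasserstein pairing from Definition~\ref{def:W2iprod}, concluding by the uniqueness in Definition~\ref{def:M2grad}. Your additional checks (that the candidate lies in $\cal{T}_q\cal{P}_2(\r^{d_x})$ and that the potential is only determined up to a constant) are sensible refinements of the same argument.
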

\begin{proof}
By comparing Definitions \ref{def:M2grad} and \ref{def:firstvar} we observe
\begin{align*}
	\frac{d}{d t}\bigg\vert_{t=0} F(\theta_t,q_t) 
	&= \iprod{ \delta_{\r^{d_\theta}} F(\theta,q)}{\frac{d}{d t} \bigg\vert_{t=0} \theta_t} + \iprod{\delta_{\cal{P}_2(\r^{d_x})} F(\theta,q)}{\frac{d }{d t}\bigg\vert_{t=0}q_t}_{\cal{L}^2(d x)} \\
	&= \iprod{\textup{grad}_{\r^{d_\theta}}F(\theta,q)}{\frac{d}{d t} \bigg\vert_{t=0} \theta_t} + \iprod{\textup{grad}_{\cal{P}_2(\r^{d_x})}F(\theta,q)}{\frac{d }{d t}\bigg\vert_{t=0}q_t}_{W_2}.
\end{align*}
In order to proceed, we refer to Defininition~\ref{def:W2iprod} and allow $\psi$ to denote the solution of $\nabla_x\cdot(q\nabla_x\psi)=-(d q_t/d t)|_{t=0}$.  The first terms on the right hand side clearly coincide. We then have, upon equating the second terms on the  right hand side of the above display and inserting $\psi$ so defined:
\begin{align*}
	\iprod{\textup{grad}_{\cal{P}_2(\r^{d_x})}F(\theta,q)}{\frac{d }{d t}\bigg\vert_{t=0} q_t}_{W_2} &= - \int \delta_{\cal{P}_2(\r^{d_x})} F(\theta,q)(x) \nabla_x \cdot (q(x)\nabla_x \psi(x))d x \\
	&= \int \iprod{\nabla_x \delta_{\cal{P}_2(\r^{d_x})} F(\theta,q)(x)}{\nabla_x \psi(x)}q(d x)
\end{align*}
where we integrated by parts. We conclude by noting that by definition of the Wasserstein innner product, Definition~\ref{def:W2iprod}, we know that $\textup{grad}_{\cal{P}_2(\r^{d_x})}F(\theta,q)$ has to satisfy $\nabla_x \cdot(q \nabla_x  \delta_{\cal{P}_2(\r^{d_x})} F(\theta,q)) = - \textup{grad}_{\cal{P}_2(\r^{d_x})}F(\theta,q)$ and the result is immediate.
\end{proof}

Using Definition \ref{def:firstvar} we readily compute $\delta_{\cal{M}_2} F(\theta,q)=(-\int \nabla_\theta \ell(\theta,x)q(d x),\log(q/\rho_\theta))$, and using the above lemma we identify the following gradient of the free energy:
\begin{equation} \label{eq:Fgrad}
	\textup{grad}_{\cal{M}_2}F(\theta,q) = \left(-\int \nabla_\theta \ell(\theta,x)q(d x) , -\nabla_x\cdot \left\{ q(\nabla_x \log(q)-\nabla_x\log(\rho_\theta)\right\} \right).
\end{equation}
The computation of the squared norm of the gradient of the free energy in $\cal{M}_2$ given in \eqref{eq:fisherinformation} follows. Futhermore, we can also make sense of $F$'s gradient flow in $\cal{M}_2$, which justifies Algorithm \ref{alg:pgd}: using, \eqref{eq:Fgrad}, we readily observe that it is given by
\begin{equation*}
	\partial_t \theta_t = \int \nabla_\theta \ell(\theta_t,x)q_t(d x), \quad \partial_t q_t = \nabla_x\cdot \left\{ q_t(\nabla_x \log(q_t)-\nabla_x\log(\rho_{\theta_t})\right\}
\end{equation*}
and we notice that Algorithm \ref{alg:pgd}'s iterates are just the discretization of these evolution equations.

Finally, we are also in a good position to motivate the use of the Langevin steps as a gradient step in $\cal{P}_2(\r^{d_x})$ \citep{Otto2001}, leading to the proof of Theorem \ref{thm:Femconvergence}, since such space is marginal in $\cal{M}_2$.  In particular,
when the parameter space is the trivial space $\{\theta_{k+1}\}$ from \eqref{eq:Fgrad} we have
\begin{equation*}
	\textup{grad}_{\cal{P}_2(\r^{d_x})}\KL(q||\pi_{\theta_{k+1}}) = \nabla_x\cdot \left\{q(\nabla_x\log(q)-\nabla_x\log(\pi_{\theta_{k+1}}))\right\}
\end{equation*}
which implies that the curve $t\mapsto q_t$ defined by 
\begin{equation*}
	\partial_t q_t = \textup{grad}_{\cal{P}_2(\r^{d_x})}\KL(q_t||\pi_{\theta_{k+1}}) = \nabla_x\cdot \left\{q_t(\nabla_x\log(q_t)-\nabla_x\log(\pi_{\theta_{k+1}}))\right\}
\end{equation*} 
is the Wasserstein gradient flow of $q\mapsto \KL(q||\pi_{\theta_{k+1}})$ in $\cal{P}_2(\r^{d_x})$ (the steepest descent curve on the space of probability distributions that connects an initial distribution to the invariant $\pi_{\theta_{k+1}}$). This differential equation is the Fokker--Planck equation of the overdamped Langevin diffusion
\begin{equation*}
	d X_t = \nabla_x \log(\pi_{\theta_{k+1}}(X_t))d t + \sqrt{2}d W_t = \nabla_x \ell(\theta_{k+1},X_t)d t + \sqrt{2}d W_t
\end{equation*}
where $W_t$ denotes a Brownian Motion. For more on Wasserstein gradient flows, see e.g. \citet{Wibisono2018,Figalli2021,Chewi2023} and references therein.

\section{On the extended log-Sobolev inequality} \label{sec:diffinequalities}

In this Appendix, we study sufficient conditions on the model to verify the extended log-Sobolev inequality \eqref{eq:extlogsobolev}, present some examples of Hierarchical models where it holds, showing that \eqref{eq:extlogsobolev} always hold in strongly log-concave models, and that it may even go beyond concavity.

\subsection{Bakry--{\'E}mery and strongly log-concave models} \label{sec:logsobsufficient}

A generalization of the \citet{Bakry1985} criterion shows that a convenient sufficient (but not necessary) condition for the extended log-Sobolev inequality is strong log-concavity.
\begin{assumption}[Strong log-concavity]\label{ass:strongconcave} There exists a $\lambda>0$ such that 
	\begin{equation*} \label{eq:ellstrongconcave}
		\ell ( (1-t) \theta + t \theta', (1-t) x + t x') \geq (1 - t) \ell (\theta,x) + t \ell (\theta',x') + \frac{\lambda t(1-t)}{2} \norm{(\theta,x) - (\theta',x')}^2,
	\end{equation*}   
	for all $(\theta,x), (\theta',x')$ in $\mathbb{R}^{d_\theta} \times \mathbb{R}^{d_x}$ and $0 \leq t \leq 1$.
\end{assumption}
Assumption \ref{ass:strongconcave} is equivalent to Assumption \ref{ass:model}(i) with $\lambda=\iota<0$ if $\ell$ is twice continuously differentiable.
\begin{theorem}[Theorem 3 in \cite{Caprio2025}]\label{thm:extbakryemery}Any model satisfying Assumptions~\ref{ass:model} and \ref{ass:strongconcave} satisfies the extended log-Sobolev inequality with constant $\lambda$.
\end{theorem}

In particular, for these models the extended log-Sobolev inequality always holds with a `dimension-free' constant. We now show a simple example of a model class in which this Bakry--{\'E}mery argument allows us to verify that the extended log-Sobolev inequality holds. 

\begin{example}[Bayesian hierarchical models] \label{ex:toymodel}
	Let $m\in\n$, $C_i\in\r^{d_x/m\times d_x/m},D\in\r^{d_\vartheta \times d_x/m}$. In Bayesian statistics it is common to consider hierarchical models of the form
	\begin{align*}
		Y_i = C_i X_i + U_i \qquad X_i = D \theta + V_i \qquad \text{for} \quad i=1,\dots,m,
	\end{align*}
	where $U_i$ and $V_i$ are i.i.d.~symmetric around 0 random vectors with distribution densities $p_{u,\sigma}$ and $p_{v,\tau}$.  $(\sigma,\tau,\theta)$ are parameters that might be known or not, see e.g. \cite[Section 13.4]{Gelman1995}.	We think of this model as describing noisy observations $Y_i$ of a latent process $X_i$ of interest. Sometimes, the latent variables are just parameters, and the probabilistic structure above is a way to attach a prior distribution to these. 
	The model is 
	\begin{equation*}
		\rho_\theta(x):=\prod_{i=1}^m p_{u,\sigma}(y_i-C_ix_i)p_{v,\tau}(x_i-D\theta).
	\end{equation*}
	Let us consider the setting where $\theta$ is unknown, and $(\sigma,\tau)$ are known. Within the Empirical Bayes setting, we would like to perform inference on the states using $\pi_{\theta_\star}$, with $\theta_\star$ being the maximum likelihood estimate of $\theta$, via the EM algorithm.
	In this setting, whether the extended log-Sobolev inequality holds depends solely on the noise distributions $p_u$ and $p_v$. It is common to assume that $p_u$ is a Normal density. In this case, any strongly log-concave density $p_v$ returns a model satisfying the extended log-Sobolev inequality by Theorem \ref{thm:extbakryemery}, and any gradient-Lipschitz density $p_v$ returns a model satisfying Assumption \ref{ass:gradLip}. Generally, if $p_v(x)\propto e^{-g(x)-\alpha \norm{x}^2}$, for some $\alpha>0$ and a smooth and convex $g$ (or also not too non-convex), then both the extended log-Sobolev inequality and Assumption \ref{ass:gradLip} hold. The Normal hierarchical model, corresponding to $g(x)=0$, also satisfies both requirements, although in this specific case $\pi_{\theta_\star}$ has a closed form solution.

\end{example}

\subsection{Operations preserving the extended log-Sobolev inequality and models with different completions} \label{sec:preservation}
The literature on functional inequalities contains many results which show preservation of functional inequalities under various operations. For instance, the standard log Sobolev inequality is known to be preserved under contractive mappings \citep{Bakry2014}, bounded perturbations \citep{Holley1987}, mixtures \citep{Chen2021}, convolutions with Gaussians and some class of smooth perturbations \citep{Cattiaux2022} and many more. These results extend considerably the settings where one can verify the log Sobolev inequality, showing that it goes far beyond strong log-concavity. These sorts of considerations also motivated the use of functional inequalities for the analysis of Langevin Monte Carlo \citep{Vempala2019,Chewi2023}. Here our goal is to illustrate that similar results hold for the extended log-Sobolev inequality, by generalizing the aforementioned contractive mapping and the bounded perturbation results. 

For the log Sobolev inequality, these results allow us to conclude that if a probability distribution satisfies the log Sobolev inequality, then another distribution not too different from the original, as obtained as a result of these operations, still does. To understand what the extended log-Sobolev inequality analogues say, recall that  for any given marginal likelihood, $Z_\theta$, there are many possible choices of complete likelihood, $\rho_\theta =  \pi_\theta Z_\theta$, depending on the completion, $\pi_\theta$, the choice which is known to dramatically impact the performance and convergence properties of the associated EM algorithm \citep{Dempster1977,Meng1997}. The following results then say that if a given model satisfies the extended log-Sobolev inequality, a model using another completion, that is not too different from the original, still does. As the log-Sobolev constant dictates EM's convergence speed, this in principle also gives an estimate on the performance we can expect from models obtained with different completions.

\begin{proposition}[Perturbation principle] \label{pr:perturbationprinciple}
	Let Assumption~\ref{ass:model} hold, and suppose that the measures $(\rho_\theta(d x))_{\theta\in\r^{d_\theta}}$ satisfy the extended log-Sobolev inequality with constant $\lambda>0$. Consider the measures $(\tilde{\rho}_\theta(d x))_{\theta\in\r^{d_\theta}}$ defined by $\tilde{\rho}_\theta:=\tilde{\pi}_\theta Z_\theta$, where  $\tilde{\pi}_\theta$ is a bounded perturbation of $\pi_\theta$ in the sense that $c^{-1}\leq d \pi_\theta/d \tilde{\pi}_\theta \leq c$ for some $c>1$ independent of $\theta$, and that
	\begin{align} \label{eq:extholleystroockgradgrowth}
		b:=\sup \frac{\norm{[\nabla_\theta \rho_\theta-\nabla_\theta \tilde{\rho}_\theta]/\tilde{\rho}_\theta}^2 }{[\log(Z_\star)-\log(Z_\theta)]}<\infty.
	\end{align}
	Then the measures $(\tilde{\rho}_\theta(d x))_{\theta\in\r^{d_\theta}}$ also satisfy the extended log-Sobolev inequality with constant $(\lambda-c^2 b)/2c^2$.
\end{proposition}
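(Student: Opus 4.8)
The plan is to adapt the classical Holley--Stroock bounded-perturbation argument for the log-Sobolev inequality to this setting, carrying the additional parameter-gradient contribution through condition \eqref{eq:extholleystroockgradgrowth} and working throughout with the extended Fisher information functional $I$ of \eqref{eq:fisherinformation} via the decomposition \eqref{eq:FminusFstar}. Two structural observations come first. Since each $\tilde\pi_\theta$ is a probability density and $\tilde\rho_\theta=\tilde\pi_\theta Z_\theta$, the normalising constants are unchanged, $\tilde Z_\theta=Z_\theta$ for every $\theta$; consequently $\tilde F_\star=F_\star$, $\tilde Z_\star=Z_\star$, and the nonnegative offset $m(\theta):=\log Z_\star-\log Z_\theta$ appearing in \eqref{eq:FminusFstar} is common to both models. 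Writing $\tilde\ell(\theta;x):=\log\tilde\rho_\theta(x)$ and $h_\theta:=\rho_\theta/\tilde\rho_\theta=\pi_\theta/\tilde\pi_\theta\in[c^{-1},c]$, a one-line computation also gives the identity $h_\theta(x)\nabla_\theta\ell(\theta;x)=\nabla_\theta\tilde\ell(\theta;x)+r_\theta(x)$ with $r_\theta:=(\nabla_\theta\rho_\theta-\nabla_\theta\tilde\rho_\theta)/\tilde\rho_\theta$, so that \eqref{eq:extholleystroockgradgrowth} reads $\sup_x\norm{r_\theta(x)}^2\le b\,m(\theta)$.

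Now I would fix $(\theta,q)\in\Macd$, put $f:=(q/\tilde\pi_\theta)^{1/2}$ and $S:=\int f^2\dif\pi_\theta=\int h_\theta\dif q\in[c^{-1},c]$, and let $p$ be the probability measure with density $f^2/S$ relative to $\pi_\theta$ (which again lies in $\Pacd$ by Assumption~\ref{ass:model}(i) and the two-sided bound on $h_\theta$). Applying the xLSI for $(\rho_\theta)$ at the point $(\theta,p)$ and using $\textup{KL}(p||\pi_\theta)=S^{-1}\textup{Ent}_{\pi_\theta}(f^2)$ (where $\textup{Ent}_\mu(g):=\int g\log g\dif\mu-(\int g\dif\mu)\log\int g\dif\mu$), $I(p||\pi_\theta)=4S^{-1}\int\norm{\nabla_x f}^2\dif\pi_\theta$, $\int\nabla_\theta\ell(\theta;x)\,p(\dif x)=S^{-1}\int\nabla_\theta\ell(\theta;x)f^2(x)\pi_\theta(\dif x)$, together with \eqref{eq:FminusFstar}--\eqref{eq:fisherinformation}, and then multiplying through by $S$, one gets
\begin{equation*}
2\lambda\bigl[\textup{Ent}_{\pi_\theta}(f^2)+S\,m(\theta)\bigr]\le\frac{1}{S}\norm{\int\nabla_\theta\ell(\theta;x)f^2(x)\pi_\theta(\dif x)}^2+4\int\norm{\nabla_x f(x)}^2\pi_\theta(\dif x).
\end{equation*}

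The argument then closes with three comparisons exploiting $c^{-1}\le h_\theta\le c$. First, the Holley--Stroock entropy estimate $\textup{Ent}_{\pi_\theta}(f^2)\ge c^{-1}\textup{Ent}_{\tilde\pi_\theta}(f^2)=c^{-1}\textup{KL}(q||\tilde\pi_\theta)$, obtained from the variational identity $\textup{Ent}_\mu(g)=\inf_{t>0}\int(g\log(g/t)-g+t)\dif\mu$ and the pointwise nonnegativity of its integrand. Second, the change-of-measure bound $4\int\norm{\nabla_x f}^2\dif\pi_\theta\le c\,I(q||\tilde\pi_\theta)$. Third, for the parameter term, $\int\nabla_\theta\ell\,f^2\dif\pi_\theta=\int h_\theta\nabla_\theta\ell\dif q=\int\nabla_\theta\tilde\ell\dif q+\int r_\theta\dif q$, so by Jensen and \eqref{eq:extholleystroockgradgrowth} the remainder satisfies $\norm{\int r_\theta\dif q}^2\le b\,m(\theta)$, whence $S^{-1}\norm{\int\nabla_\theta\ell\,f^2\dif\pi_\theta}^2\le 2c\norm{\int\nabla_\theta\tilde\ell\dif q}^2+2cb\,m(\theta)$ using $\norm{a+b}^2\le 2\norm{a}^2+2\norm{b}^2$ and $S^{-1}\le c$. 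Plugging these in, using $S\ge c^{-1}$ on the left, recognising $\norm{\int\nabla_\theta\tilde\ell(\theta;x)q(\dif x)}^2+I(q||\tilde\pi_\theta)=\norm{\textup{grad}_{\cal{M}_2}\tilde F(\theta,q)}_{\cal{M}_2}^2$ from \eqref{eq:fisherinformation} applied to $\tilde\rho$, and dividing by $2c$, one obtains
\begin{equation*}
\frac{\lambda}{c^2}\bigl[\textup{KL}(q||\tilde\pi_\theta)+m(\theta)\bigr]\le\norm{\textup{grad}_{\cal{M}_2}\tilde F(\theta,q)}_{\cal{M}_2}^2+b\,m(\theta).
\end{equation*}
Since $\textup{KL}(q||\tilde\pi_\theta)\ge 0$ we may weaken $\tfrac{\lambda}{c^2}\textup{KL}(q||\tilde\pi_\theta)$ to $(\tfrac{\lambda}{c^2}-b)\textup{KL}(q||\tilde\pi_\theta)$, and using \eqref{eq:FminusFstar} for $\tilde F$ (with $\tilde F_\star=F_\star$) this becomes $2\cdot\frac{\lambda-c^2b}{2c^2}[\tilde F(\theta,q)-F_\star]\le\norm{\textup{grad}_{\cal{M}_2}\tilde F(\theta,q)}_{\cal{M}_2}^2$; as $(\theta,q)\in\Macd$ was arbitrary, this is the xLSI for $(\tilde\rho_\theta)$ with constant $(\lambda-c^2b)/(2c^2)$.

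I expect the delicate point to be the constant bookkeeping rather than any single inequality: the crucial manoeuvre is to clear denominators by multiplying the xLSI at $(\theta,p)$ by $S$ \emph{before} performing the one-sided estimates, which is exactly what keeps the final denominator at $2c^2$ instead of $2c^4$, while the single factor of $2$ comes only from $\norm{a+b}^2\le 2\norm{a}^2+2\norm{b}^2$ applied to the parameter gradient. The remaining items---that $p$ and the Fisher-information objects for the perturbed model are well defined, and that the change-of-measure and entropy manipulations are licit---are routine given Assumption~\ref{ass:model}(i) and $c^{-1}\le\dif\pi_\theta/\dif\tilde\pi_\theta\le c$.
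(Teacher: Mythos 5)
Your proof is correct and is essentially the paper's own Holley--Stroock-style argument: the same variational characterisation of the entropy functional, the same two-sided change of measure to compare the Fisher informations, and the same splitting of the perturbed parameter gradient via $\norm{a+b}^2\le 2\norm{a}^2+2\norm{b}^2$ combined with \eqref{eq:extholleystroockgradgrowth}, arriving at the same constant $(\lambda-c^2 b)/2c^2$. The only structural difference is the direction of the auxiliary measure: you fix an arbitrary $(\theta,q)$ for the perturbed model and pull it back to $p\propto(\dif\pi_\theta/\dif\tilde\pi_\theta)\,q$ in the original model, whereas the paper pushes a point of the original model forward to $\tilde q\propto(\dif\tilde\pi_\theta/\dif\pi_\theta)\,q$; your direction is marginally cleaner in that the quantification over all perturbed points is explicit, but the two are equivalent.
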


Notice that in the result above, the marginal likelihood $Z_\theta$ of the perturbed model is the same as that of the unperturbed one, so we are comparing models with different completions defined on a common space.  In the degenerate case in which $\pi_\theta$ is independent of $\theta$, we can take $b=0$ and this result reduces, up to a factor of 2, to the Holley--Stroock perturbation lemma \citep{Holley1987} (also see \citep[Lemma 5.1.6]{Bakry2014}) which asserts that if a probability measure $\pi_\theta$ satisfies the log Sobolev inequality with constant $\lambda$, then its bounded perturbation $\tilde{\pi}_\theta$ also does, but with constant $\lambda/c^2$. In this case, Theorem \ref{thm:extbakryemery} and Proposition \ref{pr:perturbationprinciple} immediately imply that, under Assumption~\ref{ass:model}, if the log-likelihood $\ell$ is strongly concave only in the `tails' i.e.~over $A^c \times \r^{d_\theta}$, where $A$ is some compact set in $\r^{d_x}$, then the extended log-Sobolev inequality will still hold. 

When $\pi_\theta$ depends on $\theta$, \eqref{eq:extholleystroockgradgrowth} says that the gradients of the completions need to be identical in the stationary points of the marginal likelihood (at the maximum likelihood estimate, where $\log(Z_\theta)=\log(Z_\star)$), and that outside there it should not be too large relative to the tail behaviour of the marginal likelihood.  
The marginal log-likelihood is typically intractable so verifying \eqref{eq:extholleystroockgradgrowth} for an explicit estimate of $b$ is complicated in real settings. However, we can at least use it to show that the extended log-Sobolev inequality may hold for models that are not everywhere concave.

\begin{example}[The extended log-Sobolev inequality may hold outside everywhere concave settings]
	When the perturbed $\tilde{\pi}_\theta$ only differs by $\pi_\theta$ on a compact subset of $\r^{d_x}\times \r^{d_\theta}$ that does not contain the maximum likelihood estimate, we can always satisfy the gradient growth condition \eqref{eq:extholleystroockgradgrowth}, and when such compact set is far enough from the maximum likelihood estimate, $b$ will be small enough that the constant $(\lambda-c^2 b)/c^2$ stays positive.
	
	This reasoning shows that, if we start from any model satisfying the extended log-Sobolev inequality (such as the ones considered in Example \ref{ex:toymodel}), we can always perturb it slightly around some point $(x,\theta)$ far away enough from the maximum likelihood estimate, in a way it is not-concave locally there.
	
\end{example}

Using this result, we speculate it should be possible to show that the extended log-Sobolev inequality can hold in situations where the surrogate \eqref{eq:surrogatef} is not even concave.

As is the case for the log Sobolev inequality, the extended log-Sobolev inequality is preserved under the action of Lipschitz maps. For a probability measure $\mu\in\mathcal{P}(\r^{d_x})$ and a measurable map $T$ with domain $\r^{d_x}$, let $T_{\#}\mu = \mu \circ T^{-1}$ denote the pushforward of $\mu$ by $T$. 
\begin{proposition}[Contraction principle] \label{pr:contractionprinciple}
	Let Assumption~\ref{ass:model} hold, and suppose that the measures $(\rho_\theta(d x))_{\theta\in\r^{d_\theta}}$ satisfy the extended log-Sobolev inequality with constant $\lambda>0$. If  $\tilde{\pi}_\theta$ can be written as $\tilde{\pi}_\theta=T_{\#}\pi_\theta$, for some $L_T$-Lipschitz diffeomorphism $T$,  then the measures $(\tilde{\rho}_\theta(d x))_{\theta\in\r^{d_\theta}}$ defined by $\tilde{\rho}_\theta:=\tilde{\pi}_\theta Z_\theta$ also satisfy the extended log-Sobolev inequality with constant $\lambda/\max(1,L_T^2)$.
\end{proposition}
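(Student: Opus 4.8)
The plan is to deduce the xLSI for the pushed-forward family $(\Tilde{\rho}_\theta)_{\theta\in\r^{d_\theta}}$ from the xLSI assumed for $(\rho_\theta)_{\theta\in\r^{d_\theta}}$ by a change of variables along $T$, exploiting that both the free-energy gap and the extended Fisher information functional \eqref{eq:fisherinformation} transform in a controlled way. Fix $(\theta,q)\in\Macd$ and put $q':=(T^{-1})_{\#}q$, so that $q=T_{\#}q'$. Since $\Tilde{\pi}_\theta=T_{\#}\pi_\theta$ is again a probability density, $\Tilde{Z}_\theta=Z_\theta$ for every $\theta$, hence $\Tilde{Z}_\star=Z_\star$ and $\Tilde{F}_\star=F_\star$; and since $T$ is a bijection, the Kullback--Leibler divergence is invariant, $\textup{KL}(q\|\Tilde{\pi}_\theta)=\textup{KL}(T_{\#}q'\|T_{\#}\pi_\theta)=\textup{KL}(q'\|\pi_\theta)$. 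Substituting these into the decomposition \eqref{eq:FminusFstar} yields the key identity $\Tilde{F}(\theta,q)-\Tilde{F}_\star=\textup{KL}(q'\|\pi_\theta)+[\log Z_\star-\log Z_\theta]=F(\theta,q')-F_\star$.

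The second step is to show $\Tilde{I}(\theta,q)\ge\max(1,L_T^2)^{-1}\,I(\theta,q')$. For the parameter component of \eqref{eq:fisherinformation}, the change-of-variables formula $\Tilde{\pi}_\theta(x)=\pi_\theta(T^{-1}(x))\,\mmag{\det\nabla T^{-1}(x)}$ has a $\theta$-independent Jacobian factor, so $\nabla_\theta\log\Tilde{\rho}_\theta(x)=\nabla_\theta\ell\bigl(\theta;T^{-1}(x)\bigr)$, and pushing the integral through $T^{-1}$ gives $\int\nabla_\theta\log\Tilde{\rho}_\theta(x)\,q(\dif x)=\int\nabla_\theta\ell(\theta;x)\,q'(\dif x)$: the parameter term is literally unchanged. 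For the Wasserstein component, $\int\norm{\nabla_x\log(q/\Tilde{\rho}_\theta)}^2q=I(q\|\Tilde{\pi}_\theta)=I(T_{\#}q'\|T_{\#}\pi_\theta)$, and here I would invoke the classical contraction estimate for the relative Fisher information used for the LSI (cf.\ \cite{Bakry2014}): writing $h=\dif q'/\dif\pi_\theta$, the chain rule gives $\bigl[\nabla_x\log(h\circ T^{-1})\bigr](T(x))=(\nabla T(x))^{-\top}\,(\nabla_x\log h)(x)$, and since $\norm{\nabla T(x)}_{\textup{op}}\le L_T$ implies $\norm{(\nabla T(x))^{-\top}v}\ge\norm{v}/L_T$ for every $v$, a change of variables yields $I(T_{\#}q'\|T_{\#}\pi_\theta)\ge L_T^{-2}\,I(q'\|\pi_\theta)$. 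Adding the two components, and using $L_T^{-2}\ge\max(1,L_T^2)^{-1}$ for the Wasserstein piece together with $1\ge\max(1,L_T^2)^{-1}$ for the (unchanged) parameter piece, produces the claimed inequality.

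Finally I would chain these estimates with the xLSI for the original model applied at $(\theta,q')$:
\begin{equation*}
\Tilde{I}(\theta,q)\ge\frac{I(\theta,q')}{\max(1,L_T^2)}\ge\frac{2\lambda\,[F(\theta,q')-F_\star]}{\max(1,L_T^2)}=\frac{2\lambda}{\max(1,L_T^2)}\,[\Tilde{F}(\theta,q)-\Tilde{F}_\star],
\end{equation*}
which, $(\theta,q)\in\Macd$ being arbitrary, is exactly the xLSI for $(\Tilde{\rho}_\theta)_{\theta\in\r^{d_\theta}}$ with constant $\lambda/\max(1,L_T^2)$. Assumption~\ref{ass:model}(i) is needed only to guarantee that $\theta\mapsto\Tilde{\pi}_\theta(x)$ and $\theta\mapsto\Tilde{Z}_\theta$ are differentiable, so that both sides of \eqref{eq:extlogsobolev} for the transformed model are well defined. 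I expect the genuinely delicate point --- the main obstacle --- to be the very first reduction, namely verifying that $q'=(T^{-1})_{\#}q$ still belongs to $\Pacd$: absolute continuity and a.e.\ differentiability of its density are inherited from the diffeomorphism property, but finiteness of its second moment requires care and is what licenses invoking the original model's xLSI at $(\theta,q')$ (it holds automatically when $T^{-1}$ has at most linear growth, in particular when $T$ is bi-Lipschitz). Everything else is a change of variables plus the one-line operator-norm inequality, so the calculation beyond this point is routine.
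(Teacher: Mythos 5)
Your proof is correct and follows essentially the same route as the paper's: reduce to the original model via the substitution $q'=(T^{-1})_{\#}q$, note that the free-energy gap is invariant and that the parameter component of $I$ is unchanged, and bound the Wasserstein component by the standard Fisher-information contraction estimate through the chain rule and the operator-norm bound on $\nabla T$. Your closing remark about verifying $q'\in\Pacd$ (in particular finiteness of its second moment, which needs control on $T^{-1}$ rather than on $T$) is in fact slightly more careful than the paper, which asserts this directly from the Lipschitz property of $T$.
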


\section{Numerical illustration} \label{sec:illustration}

Consider the setting of Example \ref{ex:toymodel} when the both noise distributions $U_i, V_i$ are Gaussians, with variance $3/4$ and $2/5$ respectively, and $m=2$, so that only two observations $\{y_1,y_2\}$, both equal to $30$, are available. The log-likelihood is equal to $\ell(\theta,x)=\sum_{i=1}^2 -\frac{3}{2}(y_i-x_i)^2-\frac{5}{4}(x_i-\theta)^2$, and the marginal maximum likelihood estimate of $\theta$ is available in closed-form, $\theta_\star=y_1$. For this model, $L_\theta=10$, $L_x=4/3+5/2$, and $L:=L_\theta\vee L_x = 10$. By Theorem \ref{thm:extbakryemery}, we can also  estimate the log-Sobolev constant with the smallest eigenvalue of $\ell$, so that $\lambda\approx 0.65$.

Figure \ref{fig:example_bounds} shows the free energy produced by the algorithms considered in this work, together with the convergence bounds derived. The left plot of the top row shows EM's free energy (continuous line), the bound implied by Proposition \ref{pr:Femconvergencetheta} (dashed line), and the improved bound of Theorem \ref{thm:Femconvergence} (dashed dotted line). Because the bound of Theorem \ref{thm:Femconvergence} entails finding an infimum across suitable sequences, which does not appear to have a closed-form solution, we used a numerical optimizer (hence, the bound displayed might not be the sharpest possible implied by the theorem). The right plot of the top row shows gradient EM's free energy (continuous line) against the bound implied by Theorem \ref{thm:grademconv} (dashed line) when considering a constant step size equal to $1/8L$. The bottom row shows free energy (continuous line) for Langevin EM, alternating gradient descent and gradient descents (again with step size $1/8L$), together with the non-asymptotic bounds (dashed lines) implied by Theorems \ref{thm:langemconvergence}, \ref{thm:cwpgdconvergence}, and \ref{thm:pgdconvergence}, respectively. All the algorithms were started in $(\theta_0,q_0)=(0,\cal{N}(0,1))$. The plots corroborate the hierarchy between speed and implementability of the algorithms we discussed in Section \ref{sec:cwpgd}, and reflected by our convergence bounds: EM is the fastest in terms of number of iterations, followed by gradient EM and Langevin EM, and then by the two gradient descents, which have comparable performance. In terms of general implementability, the order is reversed.

\begin{figure}
    			\centering
			\includegraphics[width=1\textwidth]{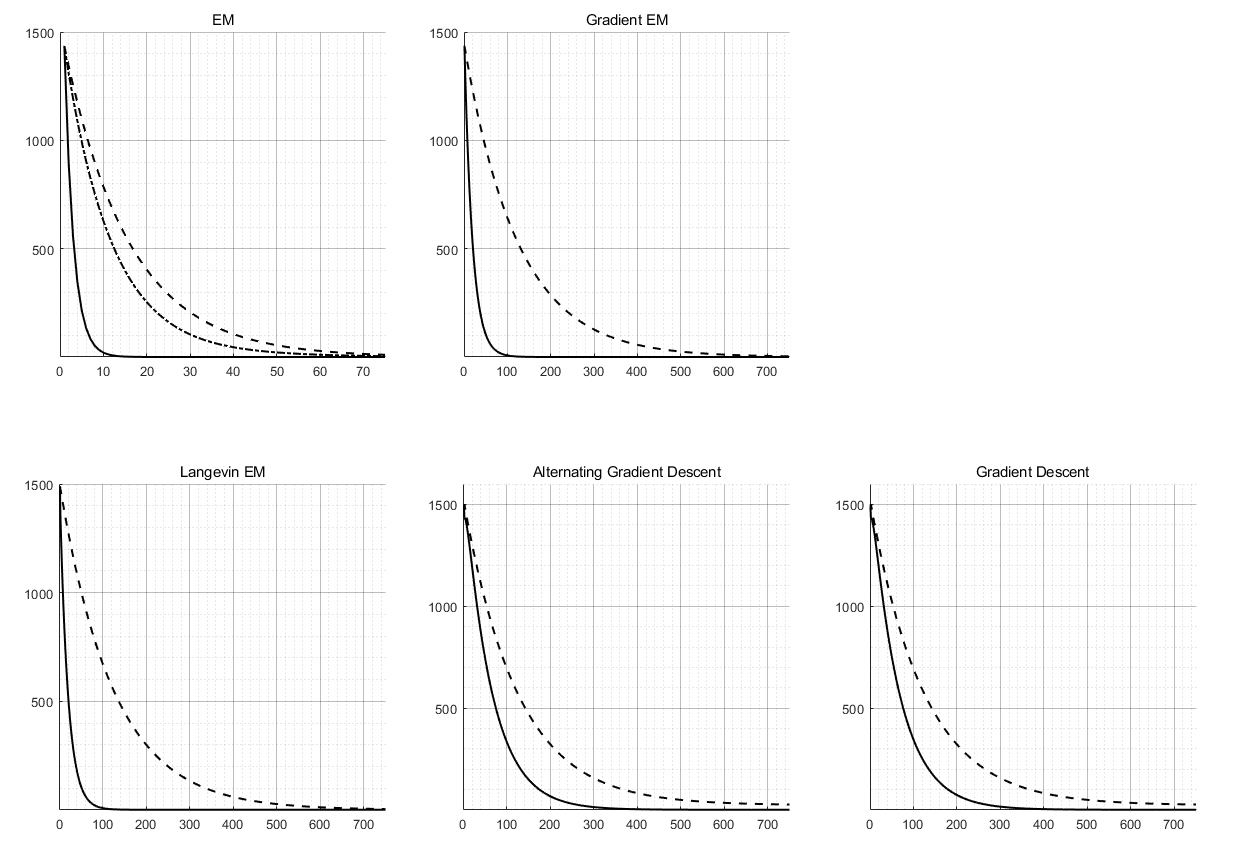}
	\caption{EM and its variants' free energy and their bounds}
	\label{fig:example_bounds}
\end{figure}

\section{Details on results from \citet{Caprio2025}} \label{sec:checkingass}

Here we justify why Theorem 4 in \citet{Caprio2025} holds under Assumptions 1(i-iv). Let $\cal{L}^1_{loc}$ denote the space of locally integrable functions, and $\cal{W}^{1,1}_{loc}$ the Sobolev space of real functions with locally integrable first weak derivatives.
\begin{proposition}
	If Assumptions 1(i-iv) hold, for any initial condition $(\theta_0,q_0)$, the flow 
	\begin{align}\label{eq:pde}
		\dot{\theta}_t = \int \nabla_\theta \ell(\theta_t,x)q_t(d x),\qquad
		\dot{q}_t=\nabla_x\cdot\bigg\{q_t \nabla_x\log\bigg(\frac{q_t}{\pi_{\theta_t}}\bigg)\bigg\}.
	\end{align}
	has a unique weak solution $(\theta_t,q_t)_{t\geq 0}$. For all $t>0$, $q_t$ admits a Lebesgue-density $t\mapsto q_t \in \cal{L}^1_{loc}((0,\infty),\cal{W}^{1,1}_{loc}(\r^{d_x}))$, and 
	\begin{equation} \label{eq:debruijintype}
		\frac{d}{dt}F(\theta_t,q_t)= -\norm{\textup{grad}_{\cal{M}_2}F(\theta_t,q_t)}^2_{\cal{M}_2} \quad \forall t>0. 
	\end{equation}
\end{proposition}
\begin{proof}
	Following an argument almost analogous to that of \citet[Theorem 11.2.1]{ambrosio2005}, the results follow from \citet[Theorem 4.0.4]{ambrosio2005} by Lemmas \ref{lemma:freenergylsc} and \ref{lemma:freenergygeoconvtype} below. 
\end{proof}

The result above is what is really needed to carry out the proofs of their Theorem 4. Notice Assumption 2 therein is only used to obtain the de-Bruijin type identity \eqref{eq:debruijintype}.

\begin{lemma} \label{lemma:freenergylsc}
	If Assumptions 1(i-iv) hold, the free energy is coercive and lower semicontinuous on $(\cal{M}_2,\mathsf{d})$.
\end{lemma}	
\begin{proof}
	Similarly to Section 11.2 in \citet{ambrosio2005}, for coercivity it is sufficient that:
	\begin{equation*}
		\exists r_\star\geq 0: \quad \inf\left\{ F(\theta,q): (\theta,q)\in\cal{M}_2, \quad \norm{\theta}^2+\int \norm{x}^2 q(dx)\leq r_\star \right\}>-\infty.
	\end{equation*} 
	Now, $F(\theta,q)=\KL(q||\pi_\theta)-\log(Z_\theta)$; $\KL(q||\pi_\theta)\geq 0$ and by assumption the second term is non-degenerate. The lower semicontinuity is shown in \citet[Lemma 18]{Caprio2025}.
\end{proof}

\begin{lemma} \label{lemma:freenergygeoconvtype}
	Let Assumption 1(i-iv) hold. For every $( \theta, q ),(\theta',q')$ there is constant speed geodesic $\gamma:[0,1]\to\cal{M}_2$ connecting $\gamma(0)=( \theta, q )$ and $\gamma(1)=(\theta',q')$ such that the free energy satisfies
	\begin{equation} \label{eq:Fgeodesicconvtype}
		F(\gamma(t))\leq (1-t)F( \theta, q )+tF(\theta',q')-\frac{\iota t(1-t)}{2}\mathsf{d}(( \theta, q ),(\theta',q'))^2.
	\end{equation}
\end{lemma}
\begin{proof}
	This follows from the same arguments of \citet[Lemma 21]{Caprio2025}.
\end{proof}

\section{Proof of Proposition \ref{pr:perturbationprinciple}} \label{app:perturbationprinciple}

Let $\phi(r):=r\log(r)$ and for a non-negative function $f$ on $\r^{d_x}\times \r^{d_\theta}$  define the  functionals
\begin{align*}
	G_\theta(f)&:= \int \phi(f)d \pi_\theta -\phi\left(\int fd \pi_\theta\right),\\
	J_\theta(f)&:= \int \frac{|\nabla_x f|^2}{f} d \pi_\theta,
\end{align*}
with $\tilde{G}_\theta(f)$ and $\Tilde{J}_\theta(f)$ denoting the corresponding functionals when the model is $\tilde{\rho}_\theta(d x):=\tilde{\pi}_\theta(d x) Z_\theta$.
Importantly, notice that if $f:=d q/d \pi_\theta$, $G_\theta(f)-\log(Z_\theta) =F(\theta,q)$ and $J_\theta(f)+\norm{\int \nabla_\theta \ell(\theta,x)q(d x)}^2=I(\theta,q)$. We begin as in the proof of \cite[Proposition 5.1.6]{Bakry2014} and use the following variational formula.
\begin{lemma}
	Letting $\phi'(r)$ denote the derivative of $\phi$ at $r$, we have,
	\begin{equation*}
		G_\theta(f) = \inf_{r>0} \int \{\phi(f)-\phi(r)-\phi'(r)(f-r)\}d \pi_{\theta}
	\end{equation*}
\end{lemma}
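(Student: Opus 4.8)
The claimed identity is a standard convex-duality fact about the entropy functional, and the plan is to establish it by the usual two inequalities. Write $\Psi(r) := \int [\phi(f) - \phi(r) - \phi'(r)(f-r)]\, d\pi_\theta$ for $r > 0$, so that we must show $E_\theta(f) = \inf_{r>0}\Psi(r)$. First I would note that since $\pi_\theta$ is a probability measure, $\int \phi(f)\, d\pi_\theta$ is common to both $E_\theta(f)$ and every $\Psi(r)$, so it suffices to prove
\begin{equation*}
    -\phi\!\left(\int f\, d\pi_\theta\right) = \inf_{r>0} \left[ -\phi(r) - \phi'(r)\!\left(\int f\, d\pi_\theta - r\right)\right],
\end{equation*}
using $\int (f-r)\, d\pi_\theta = \int f\, d\pi_\theta - r$. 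Writing $a := \int f\, d\pi_\theta$ this is the purely scalar claim that $\phi(a) = \sup_{r>0}\left[\phi(r) + \phi'(r)(a-r)\right]$, i.e.\ that $\phi$ equals the supremum of its own tangent lines — which is exactly the statement that $\phi(r) = r\log r$ is convex on $(0,\infty)$ (and lower semicontinuous, with the supremum attained at $r = a$).

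The two directions then go as follows. For the upper bound $E_\theta(f) \le \inf_{r>0}\Psi(r)$: fix any $r>0$; by convexity of $\phi$, the tangent-line inequality $\phi(r) + \phi'(r)(f(x)-r) \le \phi(f(x))$ holds pointwise, hence also after integrating against $\pi_\theta$, which rearranges to $-\phi(r) - \phi'(r)(a - r) \ge -\phi(a)$ once we recognize — wait, more carefully: integrate the reverse tangent-line inequality applied at the point $a$, namely $\phi(a) \ge \phi(r) + \phi'(r)(a-r)$, which gives $-\phi(a) \le -\phi(r) - \phi'(r)(a-r)$, and adding $\int\phi(f)\,d\pi_\theta$ to both sides yields $E_\theta(f) \le \Psi(r)$; taking the infimum over $r$ gives one direction. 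For the lower bound, choose $r = a = \int f\, d\pi_\theta$ (which is strictly positive and finite provided $E_\theta(f) < \infty$, the only case of interest): then $\Psi(a) = \int\phi(f)\, d\pi_\theta - \phi(a) - \phi'(a)\int(f-a)\, d\pi_\theta = \int\phi(f)\, d\pi_\theta - \phi(a) = E_\theta(f)$, since the linear term integrates to zero. Hence the infimum is attained and equals $E_\theta(f)$.

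The only genuinely delicate points, rather than true obstacles, are integrability bookkeeping: one should check that the displayed infimum is interpreted correctly when $E_\theta(f) = +\infty$ (in which case $\int\phi(f)\,d\pi_\theta = +\infty$ too, using that $\phi$ is bounded below by $-e^{-1}$ so the negative part is always integrable, and both sides are $+\infty$), and that when $a = \int f\, d\pi_\theta = 0$ — forced by $f \ge 0$ to mean $f = 0$ $\pi_\theta$-a.e. — the formula still reads correctly with the convention $\phi(0) = 0$. I expect no step here to be a real obstacle; the proof is a direct application of the self-duality of the convex function $\phi(r) = r\log r$, and the whole argument is two or three lines once the scalar reduction above is made explicit. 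This is why the proof in \cite{Bakry2014} of the analogous statement is invoked as the template.
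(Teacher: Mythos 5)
Your argument is correct and is essentially the paper's own proof: both reduce the claim to the scalar convexity inequality $\phi(a)\geq\phi(r)+\phi'(r)(a-r)$ with $a=\int f\,\dif\pi_\theta$, which gives $\Psi(r)\geq E_\theta(f)$ for every $r>0$ with equality at $r=a$. The mid-proof self-correction (from integrating the pointwise tangent inequality to applying it at the single point $a$) lands on exactly the step the paper uses, so no substantive difference remains.
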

\begin{proof}
	Let $X\sim\pi_\theta$. Since $\phi$ is convex, for any $r>0$,
	\begin{align*}
		\phi(E\{f(X)\})-\phi(r)-\phi'(r)(E\{f(X)\}-r) \geq 0
	\end{align*}
	so that, adding $E\{\phi(f(X))\}-\phi(E\{f(X)\})$ both sides, 
	\begin{align*}
		E\{\phi(f(X))-\phi(r)-\phi'(r)(f(X)-r)\}\geq E\{\phi(f(X))\} - \phi(E\{f(X)\}) = G_\theta(f)
	\end{align*}
	with equality if and only if $r=E\{f(X)\}$.
\end{proof}
By $\phi$'s convexity $\phi(f)-\phi(r)-\phi'(r)(f-r)\geq 0$ for all $r>0$, so using the formula above we can readily write
\begin{equation*}
	\tilde{G}_\theta(f) \leq c \inf_{r>0} \int \{\phi(f)-\phi(r)-\phi'(r)(f-r)\}d \pi_{\theta}  = c G_\theta(f).
\end{equation*}
Next, we take $f:=d q/d \pi_\theta$, which gives
\begin{align*}
	\tilde{G}_\theta(f) + c(\log(Z_\star)-\log(Z_\theta)) \leq  c(G_\theta(f)+\log(Z_\star)-\log(Z_\theta)) = c(F(\theta,q)-F_\star)
\end{align*}
and because $(\rho_\theta(d x))_{\theta\in\r^{d_\theta}}$ satisfy the extended log-Sobolev inequality 
\begin{equation*}
	2\lambda\{\tilde{G}_\theta(f) + c(\log(Z_\star)-\log(Z_\theta))\} \leq cI(\theta,q) \leq c^2\int \frac{|\nabla_x f|^2}{f} d \tilde{\pi}_\theta +c\norm{\int \nabla_\theta \ell(\theta,x)q(d x)}^2.
\end{equation*}
Define now
\begin{equation*}
	\Tilde{q}(d x):=\frac{1}{\Tilde{c} }\frac{d \tilde{\pi}_\theta}{d \pi_\theta}(x) q(d x), \qquad \Tilde{c}:= \int\frac{d \tilde{\pi}_\theta}{d \pi_\theta}(x) q(d x), \qquad \Tilde{f}(x):=\frac{f}{\Tilde{c}}(x)=\frac{d \Tilde{q}}{d \tilde{\pi}_\theta}(x),
\end{equation*}
noting that $\Tilde{q}$ is a probability measure in $\Pacd$ under our assumptions. Since the functional $G_\theta$ is homogeneous, i.e.  $\tilde{G}_\theta(af)=a\tilde{G}_\theta(f)$ for all $a>0$, we may write
\begin{align} \label{eq:extholleystroockeqn1}
	2\lambda(\Tilde{F}(\theta,p)-\Tilde{F}_\star) &= 2\lambda(\tilde{G}_\theta(\Tilde{f})-\log(Z_\theta)+\log(Z_\star)) \nonumber \\
	&\leq 2\lambda\{\tilde{G}_\theta(\Tilde{f}) + \frac{c}{\Tilde{c}}(\log(Z_\star)-\log(Z_\theta))\} \nonumber \\ 
	&\leq  \frac{c^2}{\Tilde{c}}\int \frac{|\nabla_x f|^2}{f} d \tilde{\pi}_\theta + \frac{c}{\Tilde{c}}\norm{\int \nabla_\theta \ell(\theta,x)q(d x)}^2 \nonumber \\
	&\leq c^2 \bigg( \Tilde{J}_\theta(\Tilde{f}) + \norm{\int \nabla_\theta \ell(\theta,x)\frac{d \rho_\theta}{d \tilde{\rho}_\theta}(x)\Tilde{q}(d x)}^2\bigg) \nonumber \\
	&= c^2 \bigg(\Tilde{J}_\theta(\Tilde{f})+\norm{\int \frac{\nabla_\theta \rho_\theta(x)}{\tilde{\rho}_\theta(x)}\Tilde{q}(d x)}^2\bigg)
\end{align}
where in the first and penultimate inequalities we used $\Tilde{c} \leq c$ and $\log(Z_\star)-\log(Z_\theta)\geq 0$. By the inequality $(a+b)^2\leq 2a^2+2b^2$ \cite[p. 157]{Loève1977}, Jensen's inequality and the assumed growth condition on $\norm{\nabla_\theta \tilde{\rho}_\theta(x)-\nabla_\theta \rho_\theta(x)}$,
\begin{align*}
	\norm{\int \frac{\nabla_\theta \rho_\theta(x)}{\tilde{\rho}_\theta(x)}\Tilde{q}(d x)}^2 
	&\leq  2\norm{\int \nabla_\theta \Tilde{\ell}(\theta,x)\Tilde{q}(d x)}^2 + 2 \int \norm{\frac{\nabla_\theta \tilde{\rho}_\theta(x)-\nabla_\theta \rho_\theta(x)}{\tilde{\rho}_\theta(x)}}^2\Tilde{q}(d x) \\
	&\leq  2\norm{\int \nabla_\theta \Tilde{\ell}(\theta,x)\Tilde{q}(d x)}^2 + 2 b(\log(Z_\star)-\log(Z_\theta)) \\
	&\leq  2\norm{\int \nabla_\theta \Tilde{\ell}(\theta,x)\Tilde{q}(d x)}^2 + 2b (\Tilde{F}(\theta,\Tilde{q})-\Tilde{F}_\star)
\end{align*}
where in the last inequality we used $\KL(\Tilde{q}||\tilde{\pi}_\theta)\geq 0$. Combining this estimate with \eqref{eq:extholleystroockeqn1},
\begin{align*}
	2(\lambda-c^2\cdot b)(\Tilde{F}(\theta,\Tilde{q})-\Tilde{F}_\star) \leq 2c^2\bigg(\Tilde{J}_\theta(\Tilde{f})+\norm{\int \nabla_\theta \Tilde{\ell}(\theta,x)\Tilde{q}(d x)}^2\bigg) =2c^2 \Tilde{I}(\theta,\Tilde{q}).
\end{align*}

\qed

\subsection{Proof of Proposition \ref{pr:contractionprinciple}} \label{app:contractionprinciple}
Denote with $\Tilde{F}$ and $\Tilde{I}$ the free energy and extended Fisher information for the model $\tilde{\rho}_\theta$.
Consider an arbitrary $(\theta,\Tilde{q})\in\Macd$ and set $q:=T_\#\Tilde{q}$. By assumption, $\tilde{\pi}_\theta(T(x)) \det(\nabla_x T(x))=\pi_\theta(x)$ and similarly for $q$'s density, hence, by a change of variables $x\mapsto T(x)$,
\begin{align*}
	\Tilde{F}(\theta,\Tilde{q})- \log(Z_\theta) &=\int \log\bigg(\frac{\Tilde{q}(x)}{\tilde{\pi}_\theta(x)}\bigg) \Tilde{q}(x)d x 
	= 	\int \log\bigg(\frac{\Tilde{q}(T(x))}{\tilde{\pi}_\theta(T(x))}\bigg) \Tilde{q}(T(x))\det(\nabla_x T(x))d x  \\
	&= \int \log\bigg(\frac{q(x)}{\pi_\theta(x)}\bigg) q(x)d x  
	= F(\theta,q) - \log(Z_\theta). 
\end{align*}
Thus, since $q$ belongs to $\Pacd$ by the Lipschitz property of $T$, $2\lambda[\Tilde{F}(\theta,\Tilde{q})-F_\star]=2\lambda[F(\theta,q)-F_\star] \leq I(\theta,q)$ by the extended log-Sobolev inequality. On the other hand, if we let $\norm{\cdot}^2_\infty$ denote the uniform norm, 
\begin{align*}
	&I(\theta,q) \\
    &= \int \norm{\frac{\nabla_x q(x)}{q(x)}-\frac{\nabla_x \pi_\theta(x)}{\pi_\theta(x)}}^2 q(d x) + \norm{\int \left(\frac{\nabla_\theta \pi_\theta(x)}{\pi_\theta(x)} + \frac{\nabla_\theta Z_\theta}{Z_\theta}\right) q(d x)}^2 \\
	&= \int \norm{\frac{\nabla_x (\Tilde{q}(T(x)))}{\Tilde{q}(T(x))}-\frac{\nabla_x (\tilde{\pi}_\theta(T(x)))}{\tilde{\pi}_\theta(T(x))}}^2 q(d x) + \norm{\int \left(\frac{\nabla_\theta (\tilde{\pi}_\theta(T(x)))}{\tilde{\pi}_\theta(T(x))} + \frac{\nabla_\theta Z_\theta}{Z_\theta}\right) q(d x)}^2 \\
	&\leq  \norm{\nabla_x T}^2_\infty \int \norm{\frac{\nabla_{T(x)}\Tilde{q}(T(x))}{\Tilde{q}(T(x))}-\frac{\nabla_{T(x)} \tilde{\pi}_\theta(T(x))}{\tilde{\pi}_\theta(T(x))}}^2 q(d x) + \norm{\int \left(\frac{\nabla_\theta \tilde{\pi}_\theta(T(x))}{\tilde{\pi}_\theta(T(x))} + \frac{\nabla_\theta Z_\theta}{Z_\theta}\right) q(d x)}^2 \\
	&\leq \norm{\nabla_x T}^2_\infty \Tilde{I}(\Tilde{q}||\tilde{\pi}_\theta) + \norm{\int \nabla_\theta \log(\tilde{\rho}(\theta,x)) \Tilde{q}(x)d x}^2 \\
    &\leq \max(1,\norm{\nabla_x T}^2_\infty)\Tilde{I}(\theta,\Tilde{q}),
\end{align*}
where in the second line we substituted the expression for the density of $\Tilde{q}(T(x))$ and $\tilde{\pi}_\theta(T(x))$ (noting that the terms with the determinant cancel), in the third we used the chain rule and the fact that $T$ does not depend on $\theta$, and in the fourth we used the formula for $q(x)$ and a change of variables $T(x)\mapsto x$. We conclude by noting that $L_T$ provides a bound on $\norm{\nabla_x T}_\infty$.

\qed

\section{Proof of Lemma \ref{lemma:ptintegralisnotbig}} \label{app:ptintegralisnotbig}
Because of $\theta_{k+1}$'s minimality \eqref{eq:thetaisminimizing},  by Jensen's inequality and Assumption \ref{ass:gradLip},
\begin{align*}
	\norm{\int \nabla_\theta\ell(\theta_{k+1},x)p_t(d x)}^2   
	&= \norm{E\left\{\nabla_\theta\ell(\theta_{k+1},Z_t)-\nabla_\theta\ell(\theta_{k+1},X_k)\right\}}^2 
	\leq L_x^2E\left\{\norm{Z_t-X_k}^2\right\} 
\end{align*}
for any coupling of $(Z_t,X_k)$. Let $(\theta_\dagger,x_\dagger)$ denote a stationary point of $\ell$, so that $\nabla  \ell(\theta_\dagger,x_\dagger)=0$. Inserting the expression for $Z_t$, since $Z_{kh}=X_k$ a.s., using $(a+b)^2\leq 2a^2+2b^2$, the fact $h\leq 1/4L_x \Rightarrow 2L_x(t-t_-)\leq 1/2$ and again Assumption \ref{ass:gradLip}, 
\begin{align*}
	\norm{\int \nabla_\theta\ell(\theta_{k+1},x)p_t(d x)}^2 &\leq L_x^2 E\left[ E\left\{\norm{(t-t_-)\nabla_x \ell(\theta_{k+1},X_k)+\sqrt{2}(W_t-W_{t_-})}^2 \mid X_k\right\}\right] \\
	&\leq L_x(t-t_-)\left[ 2L_x(t-t_-)E\left\{\norm{\nabla_x \ell(\theta_{k+1},X_k)}^2\right\} +4d_xL_x\right] \\
	&\leq  L_x(t-t_-)\left[\frac{1}{2}E\left\{\norm{\nabla_x \ell(\theta_{k+1},X_k)}^2\right\} +4d_xL_x\right] \\
	&\leq  L_x(t-t_-)\left[\frac{L^2}{2}E\left\{\norm{X_k-x_\dagger}^2+\norm{\theta_{k+1}-\theta_\dagger}^2\right\} +4d_xL_x\right].
\end{align*}

When $q_k$ is given by an EM update, we now derive an uniform bound for  $E(\Vert X_k-x_\dagger \Vert^2+\Vert\theta_{k+1}-\theta_\dagger\Vert^2)$. Consider the point $(\theta_\star,\pi_{\theta_\star})\in\cal{M}_\star$ corresponding to the projection of $(\theta_{k+1},q_k)$ onto $\cal{M}_\star$, let $X_\star\sim \pi_{\theta_\star}$ and consider an optimal coupling between $X_\star$ and $X_k\sim q_k$. By the definition of $\mathsf{d}_{W_2}$, Theorem \ref{thm:exttalagrand}, together with the fact that the free energy can only decrease along EM iterations,
\begin{align*}
	\lambda E\left( \norm{X_k-X_\star}^2+\norm{\theta_{k+1}-\theta_\star}^2 \right) \leq 2 (F(\theta_{k+1},q_k)-F_\star) \leq 2(F(\theta_0,q_0)-F_\star). 
\end{align*}
We conclude upon combining the two previously obtained inequalities via the bounds $\norm{X_k-x_\dagger}^2\leq 2\norm{X_\star-x_\dagger}^2+2\norm{X_k-X_\star}^2$, and the analogous bound for $\norm{\theta_{k+1}-\theta_\dagger}^2$.
\qed

\section{Proof of Theorem \ref{thm:pgdconvergence}} \label{app:pgdconv}
\subsection{Descent lemma on  $\cal{M}_2$} \label{app:descentlemma}
To study Algorithm \ref{alg:pgd}, we need a generalization of the descent lemma on the whole $\cal{M}_2$. When the parameter space is trivial and equal to $\{\theta_{k+1}\}$, this result reduces to Lemma \ref{lemma:descentonP}.
\begin{lemma}[Descent lemma on $\cal{M}_2$] \label{lemma:descentonM} Let Assumptions \ref{ass:model}--\ref{ass:gradLip} hold. Let $(\vartheta_t,p_t)$ be an interpolation in $\cal{M}_2$ between $(\theta_k,q_k)$ at $t=kh$ and $(\theta_k+h\int \nabla_\theta\ell(\theta_k,x)q_k(d x),\textup{Law}(X_k+h\nabla_x \ell(\theta_{k},X_k) + \sqrt{2h}\xi_k))$, where $\textup{Law}(X_k)=q_k$, at $t=(k+1)h$. If $h\leq 1/4L$, for $t\in [kh,(k+1)h]$,
	\begin{equation*}
		\partial_t F(\theta_t,p_t) \leq - \frac{1}{2}I(\theta_t,p_t) + 6L^2d_x(t-t_-), \quad t_-:=kh.
	\end{equation*}	
\end{lemma}

To prove Lemma \ref{lemma:descentonM}, we generalise the arguments of \citet{Vempala2019}, and the refinements in \citet[Section 4.2]{Chewi2023} under the hypothesis of non-trivial parameter space, and our proof is, therefore, somewhat similar to theirs.
Consider the continuous time process given for all $t \in [kh,(k+1)h)$ and any $k \in \mathbb{N}_0$ by
\begin{equation} \label{eq:pgdinterp}
	\begin{aligned}
		\vartheta_{kh} =& \theta_k & d \vartheta_t &= \int \nabla_\theta \ell(\vartheta_{t_-},x)p_{t_-}(d x) d t,\\
		Z_{kh} =& X_k & d Z_t &= \nabla_x \ell(\vartheta_{t_-},Z_{t_-}) d t + \sqrt{2} d W_t, 
	\end{aligned}
\end{equation}
where we set $t_-:=k h, p_t:=\textup{Law}(Z_t)$ and where $W_t$ is a Brownian Motion. Notice that this is an interpolation of the Algorithm \ref{alg:pgd} iterates. To prove Lemma \ref{lemma:descentonM} we need three auxiliary results.
\begin{lemma} \label{lemma:approxpde}
	If Assumptions \ref{ass:model}--\ref{ass:gradLip} hold, $\{(\vartheta_t,p_t): t_- \leq t < t_-+h\}$ is a solution of the PDE 
	\begin{equation}	  \label{eq:interpde}
		\begin{aligned}
			\partial_t\vartheta_t &= \int \nabla_\theta \ell(\vartheta_{t},x)p_{t}(d x) + E\{\nabla_\theta \ell(\vartheta_{t_-},Z_{t_-})-\nabla_\theta \ell(\vartheta_{t},Z_t)\}\\
			\partial_tp_t&=\nabla_x\cdot\bigg[p_t \nabla_x\log\bigg(\frac{p_t}{\rho_{\vartheta_t}}\bigg)+p_tE\{\nabla_x \ell(\vartheta_{t},\cdot)-\nabla_x \ell(\vartheta_{t_-},Z_{t_-})\mid Z_t=\cdot\}\bigg].
		\end{aligned}
	\end{equation}
\end{lemma}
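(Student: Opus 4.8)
The plan is to verify the two coupled equations of \eqref{eq:interpde} separately: the $\vartheta$-component by a direct algebraic rearrangement, and the $p_t$-component by the usual Fokker--Planck computation applied to the interpolating diffusion \eqref{eq:pgdinterp}.

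First I would treat the $\vartheta$-equation. On the interval $[t_-,t_-+h)$ the drift in \eqref{eq:pgdinterp} does not depend on $t$, so $\partial_t\vartheta_t=\int\nabla_\theta\ell(\vartheta_{t_-};x)p_{t_-}(\dif x)=\Ebb{\nabla_\theta\ell(\vartheta_{t_-};Z_{t_-})}$. Adding and subtracting $\Ebb{\nabla_\theta\ell(\vartheta_t;Z_t)}=\int\nabla_\theta\ell(\vartheta_t;x)p_t(\dif x)$ — legitimate since $\vartheta_t$ is a deterministic function of $t$ and $Z_t$ has law $p_t$ — yields exactly the first line of \eqref{eq:interpde}.

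Next, for the $p_t$-equation I would write the Fokker--Planck equation of $Z_t$. Fixing $t_-=kh$, the second line of \eqref{eq:pgdinterp} integrates, conditionally on $(\vartheta_{t_-},Z_{t_-})$, to $Z_t=Z_{t_-}+(t-t_-)\nabla_x\ell(\vartheta_{t_-};Z_{t_-})+\sqrt{2}(W_t-W_{t_-})$, so for $t>t_-$ the law $p_t$ has a smooth density with Gaussian-type tails and finite moments of all orders; this, with Assumption \ref{ass:gradLip}, justifies what follows. For $\phi\in C_c^\infty(\r^{d_x})$, Itô's formula and the martingale property of the stochastic integral give
\[
\frac{\dif}{\dif t}\Ebb{\phi(Z_t)}=\Ebb{\iprod{\nabla_x\phi(Z_t)}{\nabla_x\ell(\vartheta_{t_-};Z_{t_-})}}+\Ebb{\Delta\phi(Z_t)}.
\]
Conditioning the first term on $Z_t$ and setting $b_t(x):=\Ebb{\nabla_x\ell(\vartheta_{t_-};Z_{t_-})\,|\,Z_t=x}$ rewrites the right-hand side as $\int\iprod{\nabla_x\phi}{b_t}\dif p_t+\int\Delta\phi\,\dif p_t$; integrating by parts (valid by the density regularity just noted) produces the weak form $\partial_tp_t=-\nabla_x\cdot(p_tb_t)+\Delta p_t$.

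It then remains only to massage this into the stated shape. Using $\Delta p_t=\nabla_x\cdot(p_t\nabla_x\log p_t)$ together with $\nabla_x\log p_t-\nabla_x\log\rho_{\vartheta_t}=\nabla_x\log(p_t/\rho_{\vartheta_t})$ and $\nabla_x\log\rho_{\vartheta_t}(x)=\nabla_x\ell(\vartheta_t;x)=\Ebb{\nabla_x\ell(\vartheta_t;Z_t)\,|\,Z_t=x}$ (the last because $\vartheta_t$ is deterministic), one obtains $\nabla_x\log p_t-b_t=\nabla_x\log(p_t/\rho_{\vartheta_t})+\Ebb{\nabla_x\ell(\vartheta_t;\cdot)-\nabla_x\ell(\vartheta_{t_-};Z_{t_-})\,|\,Z_t=\cdot}$, which is precisely the bracket in the second line of \eqref{eq:interpde}. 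The only genuinely delicate point is the regularity needed to make the integration by parts and the Fokker--Planck identification rigorous, but since on each interval $Z_t$ is, conditionally on its value at the left endpoint, an affinely shifted Gaussian, the density is as smooth and as integrable as required, so no work beyond Assumptions \ref{ass:model}--\ref{ass:gradLip} is needed.
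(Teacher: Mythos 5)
Your proof is correct and follows essentially the same route as the paper's: both hinge on the identity that the effective drift of the marginal law $p_t$ is the conditional expectation $\mathbb{E}[\nabla_x \ell(\vartheta_{t_-};Z_{t_-})\,|\,Z_t=\cdot]$, followed by the identical add-and-subtract rearrangement into the stated form. The only cosmetic difference is that you derive the Fokker--Planck equation for $p_t$ in weak form via It\^{o}'s formula and test functions, whereas the paper writes down the explicit Gaussian conditional density $p_{t|t_-}$, verifies its Fokker--Planck equation directly, and then integrates out $Z_{t_-}$ using Bayes' rule and the tower property.
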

\begin{proof}
	Conditionally on $(\vartheta_{t_-},Z_{t_-})=(\vartheta,z)$, $\{(\vartheta_t,Z_t):t_-\leq t< t_-+h\}$ has a time independent drift, hence $p_{t|t_-}(\cdot|z)$, the conditional density of $Z_t$ given $Z_{t_-}=z$, is the density of a Normal with  mean $z+(t-t_-)\nabla_x \ell(\vartheta,z)$ and variance $2(t-t_-)$.  Hence, we can directly check that $p_{t|t_-}$ solves the following PDE:
	\begin{align*}
		\partial_t \vartheta_t &= \int \nabla_\theta \ell(\vartheta_{t_-},x)p_{t_-}(d x), \\
		\partial_t p_{t|t_-}&=\nabla_x\cdot\bigg\{p_{t|t_-} \nabla_x\log\bigg(\frac{p_{t|t_-}}{\rho_{\vartheta_{t_-}}}\bigg)\bigg\} = \Delta_x p_{t|t_-} - \nabla_x \cdot \{p_{t|t_-}\nabla_x\ell(\vartheta_{t_-},Z_{t_-})\}.
	\end{align*}
	One can also derive the above immediately by using formul\ae\ for the Fokker--Planck equation of It\^{o} diffusions, see \citet[Chapters 2 and 4]{Pavliotis2014}. We notice that all the derivatives of $x\mapsto p_{t|t_-}(x|z)$ exist and are continuous in all arguments by Assumption \ref{ass:gradLip}, that  $z\mapsto p_{t|t_-}(x|z)$ is at least continuous for all $x\in\r^{d_x}$, and that $t\mapsto p_{t|t_-}(x|z)$ is continuously differentiable for all $x,z\in\r^{d_x}$. By Bayes' rule, if we take expectation w.r.t.~$p_{t_-}$,
	\begin{align*}
		&E\left\{p_{t|t_-}(x|Z_{t_-})\nabla_x\ell(\vartheta_{t_-},Z_{t_-})\right\} 
		= \int p_{t|t_-}(x|z)\nabla_x\ell(\vartheta_{t_-},z)p_{t_-}(d z) \\ 
		&= p_{t}(x)\int p_{t_-|t}(d z|x)\nabla_x\ell(\vartheta_{t_-},z) 
		= p_{t}(x)E\left\{\nabla_x\ell(\vartheta_{t_-},Z_{t_-})\mid Z_t=x\right\}.
	\end{align*}
	Hence,  taking expectation in the Fokker--Planck equation, the regularity properties of $p_{t|t_-}$ allow us to use the tower rule and compute
	\begin{align*}
		\partial_t \vartheta_t = \int \nabla_\theta \ell(\vartheta_{t_-},x)p_{t_-}(d x),\quad
		\partial_t p_{t}= \Delta_x p_{t} - \nabla_x \cdot [p_{t}E\{\nabla_x\ell(\vartheta_{t_-},Z_{t_-})\mid Z_t=\cdot\}].
	\end{align*}
	Adding and subtracting $E\{\nabla_\theta \ell(\vartheta_{t},Z_t)\}$ in the first expression and $\nabla_x\cdot[p_{t}E\{\nabla_x\ell(\vartheta_t,\cdot)|Z_t=\cdot\}]=\nabla_x\cdot\{p_{t}\nabla_x \log(\rho_{\vartheta_t}\})$ in the second yields the claim. 
\end{proof}
\begin{lemma} \label{lemma:debruijinapproxineq}
	If Assumptions \ref{ass:model}--\ref{ass:gradLip} hold, for all $t\in[t_-,t_-+h)$, 
	\begin{align} \label{eq:debruijinapproxineq}
		\partial_t F(\vartheta_t,p_t) \leq - \frac{3}{4}I(\vartheta_t,p_t) + E\left\{\norm{\nabla \ell(\vartheta_{t},Z_{t})-\nabla \ell(\vartheta_{t_-},Z_{t_-})}^2\right\}
	\end{align}
\end{lemma}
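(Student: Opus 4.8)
\emph{Proof proposal.} The plan is a de Bruijn--type computation: differentiate $F$ along the interpolation $(\vartheta_t,p_t)$ of \eqref{eq:pgdinterp}, substitute the PDE from Lemma~\ref{lemma:approxpde}, integrate by parts in $x$, and then apply Young's inequality twice to absorb the resulting mismatch terms into a fraction of $I$.

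Concretely, I would first use the chain rule on $\cal{M}_2$ together with the first variation $\delta_{\cal{M}_2}F(\theta,q)=\big(-\!\int\nabla_\theta\ell(\theta;x)q(\dif x),\log(q/\rho_\theta)\big)$ derived in Section~\ref{sec:Fdiff} to write
\begin{equation*}
\partial_t F(\vartheta_t,p_t)=-\iprod{\textstyle\int\nabla_\theta\ell(\vartheta_t;x)p_t(\dif x)}{\partial_t\vartheta_t}+\int\log\!\Big(\tfrac{p_t}{\rho_{\vartheta_t}}\Big)\,\partial_t p_t\,\dif x .
\end{equation*}
Substituting the two lines of \eqref{eq:interpde}, the $\theta$-line contributes $-\norm{\int\nabla_\theta\ell(\vartheta_t;x)p_t(\dif x)}^2$ plus the cross term $-\iprod{\Ebb{\nabla_\theta\ell(\vartheta_t;Z_t)}}{\Ebb{\nabla_\theta\ell(\vartheta_{t_-};Z_{t_-})-\nabla_\theta\ell(\vartheta_t;Z_t)}}$; the $q$-line, after integrating by parts in $x$ (boundary terms vanishing by the Gaussian decay of the conditional density $p_{t|t_-}(\cdot\mid z)$ identified in Lemma~\ref{lemma:approxpde}), contributes $-\!\int\norm{\nabla_x\log(p_t/\rho_{\vartheta_t})}^2 p_t(\dif x)$ plus the cross term $-\Ebb{\iprod{\nabla_x\log(p_t/\rho_{\vartheta_t})(Z_t)}{\nabla_x\ell(\vartheta_t;Z_t)-\nabla_x\ell(\vartheta_{t_-};Z_{t_-})}}$, where the conditioning on $\{Z_t=\cdot\}$ appearing in \eqref{eq:interpde} has been eliminated by the tower property (the factor it multiplies being $\sigma(Z_t)$-measurable). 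The two clean negative terms add up precisely to $-I(\vartheta_t,p_t)$, cf.~\eqref{eq:fisherinformation}.

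Next I would estimate the two cross terms, applying $-\iprod{a}{b}\le\tfrac14\norm{a}^2+\norm{b}^2$ to each. The $\tfrac14\norm{a}^2$ halves reassemble---the $\theta$-one after Jensen's inequality---into $\tfrac14 I(\vartheta_t,p_t)$, while the $\norm{b}^2$ halves are bounded, again using Jensen for the $\theta$-term, by $\Ebb{\norm{\nabla_\theta\ell(\vartheta_{t_-};Z_{t_-})-\nabla_\theta\ell(\vartheta_t;Z_t)}^2}$ and $\Ebb{\norm{\nabla_x\ell(\vartheta_t;Z_t)-\nabla_x\ell(\vartheta_{t_-};Z_{t_-})}^2}$ respectively. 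Adding these and using $\norm{(\nabla_\theta\ell,\nabla_x\ell)}^2=\norm{\nabla_\theta\ell}^2+\norm{\nabla_x\ell}^2$ merges them into the single term $\Ebb{\norm{\nabla\ell(\vartheta_t;Z_t)-\nabla\ell(\vartheta_{t_-};Z_{t_-})}^2}$, whence $\partial_t F(\vartheta_t,p_t)\le -I+\tfrac14 I+\Ebb{\norm{\nabla\ell(\vartheta_t;Z_t)-\nabla\ell(\vartheta_{t_-};Z_{t_-})}^2}$, which is \eqref{eq:debruijinapproxineq}.

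The main obstacle is the analytic bookkeeping rather than the algebra: justifying the differentiability of $t\mapsto F(\vartheta_t,p_t)$ and the interchange of $\partial_t$ with the $x$-integral, and verifying that no boundary contributions survive the integration by parts. I would handle this by leaning on the explicit form of $p_{t|t_-}(\cdot\mid z)$ from Lemma~\ref{lemma:approxpde}---a Gaussian with mean $z+(t-t_-)\nabla_x\ell(\vartheta;z)$ and variance $2(t-t_-)$---together with the Lipschitz bound of Assumption~\ref{ass:gradLip}, which supply the needed domination and decay estimates. A secondary point to be careful about is applying Young's inequality with the \emph{same} constant $\tfrac14$ in both cross terms, so that exactly $\tfrac14 I(\vartheta_t,p_t)$---and no more---is returned to the negative term, leaving the coefficient $-\tfrac34$.
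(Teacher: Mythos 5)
Your proposal is correct and follows essentially the same route as the paper: differentiate $F$ along the interpolation using the PDE of Lemma~\ref{lemma:approxpde}, integrate by parts to isolate $-I(\vartheta_t,p_t)$, then control the two mismatch cross terms with Young's inequality $ab\le a^2/4+b^2$ plus Jensen, returning exactly $\tfrac14 I$ to the negative term. The only differences are presentational (you invoke the first variation and remove the conditioning by the tower property before Young's, whereas the paper applies Young's first and then Jensen on the conditional expectation), and your attention to the regularity/boundary-term bookkeeping matches what the paper delegates to the explicit Gaussian form of $p_{t|t_-}$.
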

\begin{proof}		
	By the regularity properties of $p_{t|t_-}$ illustrated in the proof of the previous lemma, $p_t(x) = E\{p_{t|t_-}(x,Z_{t_-})\}$ is smooth in $x$ and at least continuously differentiable in $t$. This allows us to compute
	\begin{align} \label{eq:debruijinapproxineq_1}
		\partial_t F(\vartheta_t,p_t) 
		&=\int \bigg\{\log\left(\frac{p_t(x)}{\rho_{\vartheta_t}(x)}\right)+1\bigg\} \nabla_x\cdot \bigg\{p_t(x)\nabla_x\log\bigg(\frac{p_t(x)}{\rho_{\vartheta_t}(x)}\bigg)\bigg\}d x - \norm{\int \nabla_\theta \ell(\vartheta_t,x)p_t(d x)}^2 \nonumber \\
		&+ \int   \bigg\{\log\left(\frac{p_t(x)}{\rho_{\vartheta_t}(x)}\right)+1\bigg\} \nabla_x\cdot [p_t(x)E\{\nabla_x \ell(\vartheta_{t},x)-\nabla_x \ell(\vartheta_{t_-},Z_{t_-})\mid Z_t=x\}]d x \nonumber \\
		& + E\{\nabla_\theta \ell(\vartheta_{t},Z_t)-\nabla_\theta \ell(\vartheta_{t_-},Z_{t_-})\}\cdot E\{ \nabla_\theta\ell(\vartheta_t,Z_{t})\} \nonumber \\
		&= -I(\vartheta_t,p_t) + \int  \nabla_x \log\left(\frac{p_t(x)}{\rho_{\vartheta_t}(x)}\right)\cdot E\{\nabla_x \ell(\vartheta_{t_-},Z_{t_-})-\nabla_x \ell(\vartheta_{t},x)\mid Z_t=x\} p_t(d x) \nonumber \\
		&+ E\{\nabla_\theta \ell(\vartheta_{t},Z_t)-\nabla_\theta \ell(\vartheta_{t_-},Z_{t_-})\}\cdot E\{ \nabla_\theta\ell(\vartheta_t,Z_{t})\},
	\end{align}
	where we used the regularity of $\{(\vartheta_t,p_t):t_-\leq t < t_- + h\}$ to differentiate inside the integral and the chain rule in the first equality, and we integrated by parts for the second.  Now, by the Young's inequality $ab\leq (a^2/4) + b^2$ and then Jensen's inequality, we have 
	\begin{align*}
		&\int  \nabla_x \log\left(\frac{p_t(x)}{\rho_{\vartheta_t}(x)}\right)\cdot E\{\nabla_x \ell(\vartheta_{t_-},Z_{t_-})-\nabla_x \ell(\vartheta_{t},x)\mid Z_t=x\} p_t(d x)  \\
		&\leq \frac{1}{4}I(p_t||\pi_{\vartheta_{t}}) + \int \norm{E\{\nabla_x \ell(\vartheta_{t_-},Z_{t_-})-\nabla_x \ell(\vartheta_{t},x)\mid Z_t=x\}}^2p_t(d x)  \\
		&\leq \frac{1}{4}I(p_t||\pi_{\vartheta_{t}}) + E\left\{\norm{\nabla_x \ell(\vartheta_{t},Z_t)-\nabla_x \ell(\vartheta_{t_-},Z_{t_-})}^2\right\}, 
	\end{align*} 
	and similarly
	\begin{align*}
		&E\{\nabla_\theta \ell(\vartheta_{t},Z_t)-\nabla_\theta \ell(\vartheta_{t_-},Z_{t_-})\}\cdot E\{ \nabla_\theta\ell(\vartheta_t,Z_{t})\} \\
		&\leq \frac{1}{4}\norm{\int \nabla_\theta \ell(\vartheta_{t},x)p_t(d x)}^2 + 	E\left\{\norm{\nabla_\theta \ell(\vartheta_{t},Z_t)-\nabla_\theta \ell(\vartheta_{t_-},Z_{t_-})}^2\right\},
	\end{align*}
	now by combining these last two estimates with \eqref{eq:debruijinapproxineq_1} we prove the desired result. 
\end{proof}	

Our goal now is to obtain a bound on the rightmost term in \eqref{eq:debruijinapproxineq} in terms of $I$.

\begin{lemma} \label{lemma:sqnormgradellbound}
	If Assumption \ref{ass:gradLip} holds, for any $\theta\in\r^{d_\theta}$ and $q\in\cal{P}_2(\r^{d_x})$ we have
	\begin{align*}
		\norm{E\left\{\nabla_\theta \ell(\theta,X)\right\}}^2+E\left\{\norm{\nabla_x \ell(\theta,X)}^2\right\} \leq I(\theta,q) + 2Ld_x, \quad \text{where} \quad \textup{Law}(X)=q.
	\end{align*}
\end{lemma}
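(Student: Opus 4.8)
The plan is to reduce the claim to its $x$-component and then invoke a single integration by parts. First I would record that, since $Z_\theta$ does not depend on $x$, $\nabla_x \ell(\theta;x) = \nabla_x \log \rho_\theta(x) = \nabla_x \log \pi_\theta(x)$, so that by \eqref{eq:fisherinformation}--\eqref{eq:stdfisherinformation},
\[
	I(\theta,q) = \norm{\Ebb{\nabla_\theta \ell(\theta;X)}}^2 + I(q||\pi_\theta), \qquad I(q||\pi_\theta) = \Ebb{\norm{\nabla_x \log q(X) - \nabla_x \ell(\theta;X)}^2}.
\]
The first summand already matches a term on the left-hand side, so it suffices to establish the scalar-type bound $\Ebb{\norm{\nabla_x \ell(\theta;X)}^2} \le I(q||\pi_\theta) + 2Ld_x$ and then add $\norm{\Ebb{\nabla_\theta \ell(\theta;X)}}^2$ back.

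For this I would expand the square defining $I(q||\pi_\theta)$ by writing $\nabla_x\ell = \nabla_x\log q - (\nabla_x \log q - \nabla_x \ell)$; taking expectations under $q$ this gives $\Ebb{\norm{\nabla_x\ell(\theta;X)}^2} = I(q||\pi_\theta) - \Ebb{\norm{\nabla_x \log q(X)}^2} + 2\,\Ebb{\iprod{\nabla_x \log q(X)}{\nabla_x\ell(\theta;X)}}$. The middle term is nonnegative and can be dropped, while the cross term equals $\int \iprod{\nabla_x q(x)}{\nabla_x \ell(\theta;x)}\dif x = -\Ebb{\Delta_x \ell(\theta;X)}$ after an integration by parts in $x$. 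Hence $\Ebb{\norm{\nabla_x\ell(\theta;X)}^2} \le I(q||\pi_\theta) - 2\Ebb{\Delta_x \ell(\theta;X)}$.

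It then remains to bound $-\Delta_x\ell$ from above. By Assumption \ref{ass:gradLip} the map $x \mapsto \nabla_x \ell(\theta;x)$ is $L$-Lipschitz, so (invoking Rademacher's theorem) at a.e.\ $x$ the Hessian $\nabla_x^2 \ell(\theta;x)$ exists with operator norm at most $L$; in particular $\nabla_x^2\ell(\theta;x) \succeq -L I_{d_x}$, whence $\Delta_x\ell(\theta;x) = \operatorname{tr}\nabla_x^2\ell(\theta;x) \ge -Ld_x$ a.e., and therefore $-2\Ebb{\Delta_x\ell(\theta;X)} \le 2Ld_x$. Combining this with the previous display and restoring the $\theta$-term yields exactly the asserted inequality.

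The only genuinely delicate step is the integration by parts: since $\nabla_x\ell(\theta;\cdot)$ is merely Lipschitz rather than $C^2$ one cannot differentiate it classically, and one must also verify that no boundary term survives. I would handle this in the standard way, approximating $\nabla_x\ell(\theta;\cdot)$ by mollification (or truncating space), noting that its distributional divergence is the $L^\infty$ function $\Delta_x\ell$ satisfying $\Delta_x\ell \ge -Ld_x$, and using that $q\in\cal{P}_2(\r^{d_x})$ together with the finiteness of $I(q||\pi_\theta)$ (and of $\Ebb{\norm{\nabla_x\ell(\theta;X)}^2}$, which follows from the Lipschitz bound and the finite second moment) makes every integrand involved absolutely integrable; this is the same justification underlying the de Bruijn-type identities already used in Lemma \ref{lemma:descentonP}.
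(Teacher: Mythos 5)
Your proof is correct and reaches the paper's bound by a genuinely more elementary route. The paper follows the Vempala--Wibisono/Chewi presentation: it writes $\Ebb{\norm{\nabla_x \ell(\theta;X)}^2}=\Ebb{-\Delta_x\ell(\theta;X)}+\int \cal{L}\ell\,\dif q$ using the Langevin generator $\cal{L}\ell=\Delta_x\ell+\norm{\nabla_x\ell}^2$, integrates by parts, rewrites the resulting Dirichlet-form term via $\nabla_x\sqrt{\dif q/\dif\pi_\theta}$, and closes the loop with Young's inequality and a rearrangement. You instead expand $\norm{\nabla_x\ell}^2$ around $\nabla_x\log q$, obtaining
\begin{equation*}
\Ebb{\norm{\nabla_x\ell(\theta;X)}^2}=I(q||\pi_\theta)-\Ebb{\norm{\nabla_x\log q(X)}^2}+2\,\Ebb{\iprod{\nabla_x\log q(X)}{\nabla_x\ell(\theta;X)}},
\end{equation*}
drop the nonpositive middle term, and convert the cross term to $-2\Ebb{\Delta_x\ell(\theta;X)}$ by a single integration by parts; both arguments then rest on the same a.e.\ Hessian bound $-\Delta_x\ell\le Ld_x$ from Assumption \ref{ass:gradLip} and give the identical constant $2Ld_x$. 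Your version avoids the generator, the square-root substitution and the Young/rearrangement step, and you are in fact more careful than the paper about justifying the integration by parts (mollification, absence of boundary terms, finiteness of each summand so the rearrangement is legitimate). The one trade-off worth noting is that your decomposition manipulates $\nabla_x\log q$ pointwise and so implicitly wants $q$ to have a strictly positive, weakly differentiable density, whereas the paper's $\nabla_x\sqrt{\dif q/\dif\pi_\theta}$ formulation degrades slightly more gracefully where $q$ vanishes; since the lemma is only invoked for the smooth, everywhere-positive interpolants $p_t$ (and since $I(\theta,q)=+\infty$ otherwise, making the claim vacuous), this does not affect correctness.
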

\begin{proof}
	This inequality follows almost immediately from \citet[Lemma 4.2.5]{Chewi2023} or \citet[Lemma 16]{Chewi2022} and the definition of $I(\theta,q)$. We reproduce those proofs here in our notation for convenience. Consider the (overdamped) Langevin diffusion with stationary distribution $\pi_\theta\propto e^{\ell(\theta,\cdot)}$. Its generator satisfies $\cal{L}\ell(\theta,\cdot)=\Delta_x \ell(\theta,\cdot) + \norm{\nabla_x \ell(\theta;\cdot)}^2$. We can estimate directly
	\begin{align*}
		E\left\{\norm{\nabla_x \ell(\theta,X)}^2\right\} 
		&= E\{-\Delta_x \ell(\theta,X)+\cal{L}\ell(\theta,X)\} \\
		&\leq Ld_x + \int \cal{L}\ell(\theta;x) \frac{d q}{d \pi_\theta}(x)\pi_\theta(d x)  \\
		&= Ld_x + \int \norm{\nabla_x\ell(\theta,x)}^2 \frac{d q}{d \pi_\theta}(x) \pi_\theta( x) + \Delta_x \ell(\theta,x) \frac{d q}{d \pi_\theta}(x) \pi_\theta(x) d x \\
		&=   Ld_x + \int \norm{\nabla_x\ell(\theta,x)}^2 \frac{d q}{d \pi_\theta}(x) \pi_\theta(x) - \iprod{\nabla_x\ell(\theta,x)}{\nabla_x\left(\frac{d q}{d \pi_\theta}(x)\pi_\theta(x)\right)}d x,  
	\end{align*}
	where in the second line we used $-\Delta_x \ell(\theta,\cdot) \leq L d_x$ by Assumption \ref{ass:gradLip}, in the fourth integration by parts. Now, with the product rule and the fact $\nabla_x \ell(\theta,x) \pi_\theta(x)=\nabla_x \pi_\theta(x)$, 
	\begin{align*}
		E\left\{\norm{\nabla_x \ell(\theta,X)}^2\right\} 
		&=Ld_x - \int \iprod{\nabla_x\ell(\theta,x)}{\nabla_x\frac{d q}{d \pi_\theta}(x)}\pi_\theta(d x)\\
		&= Ld_x - 2\int \iprod{\sqrt{\frac{d q}{d \pi_\theta}}(x)\nabla_x\ell(\theta,x)}{\nabla_x\sqrt{\frac{d q}{d \pi_\theta}}(x)}\pi_\theta(d x) \\
		&\leq Ld_x + \frac{1}{2}E\left\{\norm{\nabla_x \ell(\theta,X)}^2\right\} + 2\int \bigg\Vert\nabla_x\sqrt{\frac{d q}{d \pi_\theta}}(x)\bigg\Vert^2 \pi_\theta(d x) \\
		&= Ld_x + \frac{1}{2}E\left\{\norm{\nabla_x \ell(\theta,X)}^2\right\} + \frac{1}{2}I(q||\pi_\theta),
	\end{align*}
	where we used the chain rule, and the Young's inequality $ab\leq (a^2/4) + b^2$ again. Now re-arranging, adding $\big\Vert E\{\nabla_\theta \ell(\theta,x)\}\big\Vert^2$ to both sides and using the definition of  $I(\theta,q)$ proves the bound.
\end{proof}	

Having established these intermediate results we can return to the now straightforward proof of the lemma of interest.
\begin{proof}[Proof of Lemma \ref{lemma:descentonM}]
	By Assumption \ref{ass:gradLip}, the inequality $(a+b)^2\leq 2a^2 +2b^2$, and Jensen's, 
	\begin{align*}
		&E\left\{\norm{\nabla \ell(\vartheta_{t},Z_{t})-\nabla \ell(\vartheta_{t_-},Z_{t_-})}^2 \right\} 
		\leq L^2 E\left\{\norm{(\vartheta_{t},Z_{t})-(\vartheta_{t_-},Z_{t_-})}^2\right\} \\
		&= L^2(t-t_-)^2 \left[\norm{E\{\nabla_\theta\ell(\vartheta_{t_-},Z_{t_-})\}}^2+E\left\{\norm{\nabla_x \ell(\vartheta_{t_-},Z_{t_-})}^2\right\} \right] + 2L^2E\left\{\norm{W_t-W_{t_-}}^2\right\} \\
		&\leq 2L^2(t-t_-)^2\left[\norm{E\{\nabla_\theta\ell(\vartheta_{t},Z_{t})\}}^2+ E\left\{\norm{\nabla_x \ell(\vartheta_{t},Z_{t})}^2\right\} + E\left\{\norm{\nabla \ell(\vartheta_{t},Z_{t})-\nabla \ell(\vartheta_{t_-},Z_{t_-})}^2 \right\} \right] \\
		&+ 2L^2E\left\{\norm{W_t-W_{t_-}}^2\right\}.
	\end{align*}
	Hence, if $h\leq 1/2L\Rightarrow 1/2\leq (1-2L^2(t-t_-)^2)$ we can then rearrange into
	\begin{align*}
		&\frac{1}{2}E\left\{\norm{\nabla \ell(\vartheta_{t},Z_{t})-\nabla \ell(\vartheta_{t_-},Z_{t_-})}^2 \right\} \leq (1-2L^2(t-t_-)^2)E\left\{\norm{\nabla \ell(\vartheta_{t},Z_{t})-\nabla \ell(\vartheta_{t_-},Z_{t_-})}^2 \right\} \\
		&\leq 2L^2(t-t_-)^2\left[\norm{E\left\{\nabla_\theta\ell(\vartheta_{t},Z_{t})\right\}}^2+E\left\{\norm{\nabla_x \ell(\vartheta_{t},Z_{t})}^2\right\}\right] + 2L^2d_x(t-t_-).
	\end{align*}
	If, further, $h\leq 1/4L\Rightarrow 4L^2(t-t_-)^2\leq 1/4$, we can use Lemma \ref{lemma:sqnormgradellbound} to estimate 
	\begin{align*}
		E\left\{\norm{\nabla \ell(\vartheta_{t},Z_{t})-\nabla \ell(\vartheta_{t_-},Z_{t_-})}^2 \right\}  &\leq \frac{1}{4}I(\vartheta_{t},p_{t}) + 8L^3d_x(t-t_-)^2 + 4L^2d_x(t-t_-) \\
		&\leq \frac{1}{4}I(\vartheta_{t},p_{t}) + 6L^2d_x(t-t_-),
	\end{align*}
	where we used $8L(t-t_-)\leq 2$. We conclude by combining this inequality with Lemma~\ref{lemma:debruijinapproxineq}.
\end{proof}

\subsection{Proof of Theorem \ref{thm:pgdconvergence}} 
By the descent Lemma \ref{lemma:descentonM} and the extended log-Sobolev inequality we have
\begin{align*}
	\partial_t (F(\vartheta_t,p_t)-F_\star) \leq - \frac{1}{2}I(\vartheta_t,p_t) + 6L^2d_x(t-t_-)  \leq -\lambda (F(\vartheta_t,p_t)-F_\star)  + 6L^2d_x(t-t_-).
\end{align*}
Now we proceeding similarly to the proof of Theorem \ref{thm:Femconvergence}, and write 
\begin{align*}
	\partial_t \left\{e^{t\lambda}(F(\vartheta_t,p_t)-F_\star) \right\} \leq 6L^2d_xhe^{t\lambda},
\end{align*}
so that integrating from $t=kh$ to $t=(k+1)k$ gives
\begin{align*}
	(F(\vartheta_{(k+1)h},p_{(k+1)h})-F_\star)  \leq e^{-h\lambda} (F(\vartheta_{kh},p_{kh})-F_\star) + 6L^2d_xh^2. 
\end{align*}
We conclude by substituing $h$ with $h_{k+1}$ and iterating the above inequality, and by combining the result with the extension of Talagrand inequality via Theorem \ref{thm:exttalagrand}.
\qed

\section{Proof of Theorem \ref{thm:cwpgdconvergence}}

The proof of this result is similar in spirit to that of Theorem~\ref{thm:pgdconvergence}. Let $(p_t)$ be an interpolation in $\cal{P}_2(\r^{d_x})$ between $q_k$ at $t=kh$ and $q_{k+1}=\textup{Law}(X_k+h\nabla_x \ell(\theta_{k+1},X_k) + \sqrt{2h}\xi_k)$, where $\textup{Law}(X_k)=q_k$, at $t=(k+1)h$, defined by the law of
\begin{align*}
	Z_{t_-} =& X_k & d Z_t &= \nabla_x \ell(\theta_{k+1},Z_{t_-}) d t + \sqrt{2} d W_t,
\end{align*}
where $t\in [kh,(k+1)h]$ and $t_-=kh$. Let $(\vartheta_t)$ instead be an interpolation in $\r^{d_\theta}$ between $\theta_k$ at $t=kh$ and $\theta_{k+1}$ at $t=(k+1)h$, defined by 
\begin{align*}
	\vartheta_{kh}&=\theta_k &  d\vartheta_t&=\int \nabla_\theta\ell(\theta_k,x)q_k(dx) dt.
\end{align*}
Notice that, as opposed to the proof of Theorem~\ref{thm:pgdconvergence}, here $(p_t)$ interpolates the iterates of Algorithm~\ref{alg:cwpgd}.
\begin{lemma} \label{lemma:cwdebruijinapproxineq2}
	If Assumptions \ref{ass:model}--\ref{ass:gradLip} hold, for all $t\in[t_-,t_-+h)$, 
	\begin{equation}	  \label{eq:cwinterpde}
		\begin{aligned}
			\partial_t F(\vartheta_{t},p_t) =& - E\{\nabla_\theta \ell(\vartheta_t,Z_t)\}\cdot E\{\nabla_\theta \ell(\theta_k,X_k)\} \\
			&- \int \nabla_x \log\left(\frac{p_t(x)}{\pi_{\vartheta_{t}(x)}} \right) \cdot \left\{ \nabla_x \log(p_t(x)) - E\{\nabla_x \ell(\theta_{k+1},X_k)|Z_t=x\}\right\}p_t(dx).
		\end{aligned}
	\end{equation}
\end{lemma}
\begin{proof}
	In $t\in[t_-,t_-+h)$, $p_t$ satisfies
	\begin{align*}
		\partial_t p_t &= \Delta_x p_t - \nabla_x \cdot (p_t E\{\nabla_x\ell(\theta_{k+1},X_k)|Z_t=\cdot\} ) \\
		&= \nabla_x \cdot (p_t(\nabla_x \log(p_t)-E\{\nabla_x\ell(\theta_{k+1},X_k)|Z_t=\cdot\}))
	\end{align*}	
	(this is shown in at the end of Lemma~\ref{lemma:approxpde}, here we just have $\theta_{k+1}$ instead of $\vartheta_{t_-}=\theta_k$, or can be deduced from the final line in the proof of \citet[Proposition 4.2.3]{Chewi2023}). 
	Differentiating $F(\vartheta_t,p_t)$,
	\begin{align*}
		\partial_t F(\vartheta_t,p_t) = \int \left(1+\log \left(\frac{p_t(x)}{\pi_{\vartheta_t}(x)} \right) \right)\partial_t p_t - \int \partial_t \ell(\vartheta_t,x)p_t(dx)
	\end{align*}
	and we conclude by integrating by parts.
\end{proof}

We will work towards bounding the right hand side of \eqref{eq:cwinterpde} with $I(\vartheta_t,p_t)$, similarly to the proof of Lemma \ref{lemma:debruijinapproxineq}.
The first term can be decomposed, by Young's inequality, via
\begin{align*} 
	&- E\{\nabla_\theta \ell(\vartheta_t,Z_t)\}\cdot E\{\nabla_\theta \ell(\theta_k,X_k)\} \nonumber \\
	&=-\norm{E\{\nabla_\theta \ell(\vartheta_t,Z_t)\}}^2 - E\{\nabla_\theta \ell(\vartheta_t,Z_t)\}\cdot \left[ E\{\nabla_\theta \ell(\vartheta_t,Z_t)\}-E\{\nabla_\theta \ell(\theta_k,X_k)\}\right] \nonumber \\
	&\leq -\norm{E\{\nabla_\theta \ell(\vartheta_t,Z_t)\}}^2 + \frac{1}{4}\norm{E\{\nabla_\theta \ell(\vartheta_t,Z_t)\}}^2 + E\left\{\norm{\nabla_\theta\ell(\vartheta_t,Z_t)-\nabla_\theta\ell(\theta_k,X_k)}^2\right\}
\end{align*}
For the second, similarly, 
\begin{align*}
	&-\int \nabla_x \log\left(\frac{p_t(x)}{\pi_{\vartheta_{t}(x)}} \right)\cdot \left\{ \nabla_x \log(p_t(x)) - E\{\nabla_x \ell(\theta_{k+1},X_k)|Z_t=x\}\right\}p_t(dx)  \nonumber \\
	&\leq -I(p_t||\pi_{\vartheta_t}) + \frac{1}{4}I(p_t||\pi_{\vartheta_t}) + \int E\left\{\norm{\nabla_x\ell(\vartheta_t,x)-\nabla_x\ell(\theta_{k+1},X_k)}^2|Z_t=x\right\}p_t(dx)\nonumber \nonumber \\
	&\leq -\frac{3}{4}I(p_t||\pi_{\vartheta_t}) + E\left\{\norm{\nabla_x\ell(\vartheta_t,Z_t)-\nabla_x\ell(\theta_{k+1},X_k)}^2\right\}.
\end{align*}
Adding these, we find an analogue of the result of Lemma \ref{lemma:debruijinapproxineq}:
\begin{align*} \label{eq:Fvariationssoul1}
	&\partial_t F(\vartheta_t,p_t) \\
	&\leq - \frac{3}{4}I(\theta_t,p_t) + E\{\norm{\nabla_x\ell(\vartheta_t,Z_t)-\nabla_x\ell(\theta_{k+1},X_k)}^2\} + E\{\norm{\nabla_\theta\ell(\vartheta_t,Z_t)-\nabla_\theta\ell(\theta_k,X_k)}^2\}.
\end{align*}
Next, under Assumption \ref{ass:gradLip}, and much as in the proof of Lemma \ref{lemma:descentonM}, we directly estimate
\begin{align*}
	&E\left\{\norm{\nabla_x\ell(\vartheta_t,Z_t)-\nabla_x\ell(\theta_{k+1},X_k)}^2\right\} + E\left\{\norm{\nabla_\theta\ell(\vartheta_t,Z_t)-\nabla_\theta\ell(\theta_k,X_k)}^2\right\} \\
	&\leq
	2E\left\{\norm{\nabla\ell(\vartheta_t,Z_t)-\nabla\ell(\theta_{k},X_k)}^2\right\} +2E\left\{\norm{\nabla_x\ell(\theta_k,X_k)-\nabla_x\ell(\theta_{k+1},X_k)}^2\right\} \\
	&\leq 
	2L^2E\left\{\norm{(\vartheta_t,Z_t)-(\theta_k,X_k)}^2\right\} + 2L^2\norm{\theta_k-\theta_{k+1}}^2 \\
	&=2L^2(t-t_-)^2\{\norm{E\left\{\nabla_\theta \ell(\theta_k,X_k)\right\}}^2 + E\left\{\norm{\nabla_x\ell(\theta_{k+1},X_k)}^2\}\right\} \\
	&+4L^2(t-t_-)d_x + 2L^2h^2\norm{E\{\nabla_\theta \ell(\theta_k,X_k)\}}^2 \\
	&\leq 4L^2h^2\{\norm{E\left\{\nabla_\theta \ell(\theta_k,X_k)\right\}}^2 + E\left\{\norm{\nabla_x\ell(\theta_{k+1},X_k)}^2\}\right\}+4L^2hd_x \\
	&\leq 8L^2h^2\big[\norm{E\left\{\nabla_\theta \ell(\vartheta_t,Z_t)\right\}}^2 + E\left\{\norm{\nabla_x\ell(\vartheta_t,Z_t)}^2\right\} \\
	&+E\left\{\norm{\nabla_x\ell(\vartheta_t;Z_t)-\nabla_x\ell(\theta_{k+1},X_k)}^2\right\} 
	+ E\left\{\norm{\nabla_\theta\ell(\vartheta_t,Z_t)-\nabla_\theta\ell(\theta_k,X_k)}^2\right\}\big] +4L^2hd_x.
\end{align*}
Re-arranging, and using $h\leq 1/4L\Rightarrow 8h^2L^2\leq 1/2\Rightarrow (1-8h^2L^2)\geq 1/2$,
\begin{align*}
	&\frac{1}{2}\big[E\left\{\norm{\nabla_x\ell(\vartheta_t,Z_t)-\nabla_x\ell(\theta_{k+1},X_k)}^2\right\} 
	+ E\left\{\norm{\nabla_\theta\ell(\vartheta_t,Z_t)-\nabla_\theta\ell(\theta_k,X_k)}^2\right\}\big] \\
	&\leq  8L^2h^2\big[\norm{E\{\nabla_\theta \ell(\vartheta_t,Z_t\}}^2 + E\left\{\norm{\nabla_x\ell(\vartheta_t,Z_t}^2\right\} \big] +4L^2hd_x \\
	&\leq 8L^2h^2 \big[I(\vartheta_t,p_t)+2Ld_x\big] +4L^2hd_x,
\end{align*}
where we also used Lemma \ref{lemma:sqnormgradellbound}. Since $h\leq 1/8L\Rightarrow 16h^2L^2 \leq 1/4$ as well, combining with our previous bound for $	\partial_t F(\vartheta_t,p_t)$,
\begin{align*}
	\partial_t F(\vartheta_t,p_t) \leq -\frac{1}{2}I(\vartheta_t,p_t)+ 32L^3h^2d_x + 8L^2hd_x \leq -\frac{1}{2}I(\vartheta_t,p_t)+ 12L^2hd_x.
\end{align*}
By the extended log-Sobolev inequality, we obtain, 
\begin{align*}
	\partial_t F(\vartheta_t,p_t) \leq -\lambda F(\vartheta_t,p_t)+ 12L^2hd_x,
\end{align*}
at which point we use the same arguments as the proof of Theorem \ref{thm:pgdconvergence}.

\end{document}